\documentclass{article} 
\usepackage{iclr2026_conference,times}

\usepackage{amsmath,amsfonts,bm}
\usepackage{xargs}

















\def\1{\bm{1}}

\newcommand{\test}{\mathcal{D_{\mathrm{test}}}}










\DeclareMathAlphabet{\mathsfit}{\encodingdefault}{\sfdefault}{m}{sl}
\SetMathAlphabet{\mathsfit}{bold}{\encodingdefault}{\sfdefault}{bx}{n}











\newcommand{\E}{\mathbb{E}}

\newcommand{\rset}{\mathbb{R}}




\usepackage{hyperref}
\usepackage{amsthm}
\usepackage{amsmath}
\usepackage{amsfonts}
\usepackage{amssymb}
\usepackage{mathtools}
\usepackage{nicefrac}   
\usepackage{url}
\usepackage{xcolor}
\usepackage{xcolor}
\usepackage{booktabs}
\usepackage{enumitem}
\setlist[enumerate]{noitemsep, topsep=0pt,left=0pt, parsep=0pt}
\usepackage{cleveref}
\usepackage{bbm}
\usepackage{algorithm}
\usepackage{algpseudocode}
\usepackage{amsmath}
\usepackage{comment}
\usepackage{multirow}

\setlength{\marginparwidth}{2cm}
\usepackage[colorinlistoftodos,backgroundcolor=yellow!20,bordercolor=red,linecolor=blue]{todonotes}

\newcommand{\ecoparagraph}[1]{\vspace{0.1cm}\noindent\textbf{#1}}

\newtheorem{assumption}{Assumption}
\crefname{assumption}{assumption}{Assumption}

\def\EE{\mathbb{E}}
\def\PP{\mathbb{P}}
\def\RR{\mathbb{R}}
\def\AC{\mathcal{A}}
\def\BC{\mathcal{B}}
\def\CC{\mathcal{C}}
\def\DC{\mathcal{D}}

\def\LC{\mathcal{L}}
\def\MC{\mathcal{M}}
\def\NC{\mathcal{N}}

\def\QC{\mathcal{Q}}
\def\UC{\mathcal{U}}
\def\VC{\mathcal{V}}
\def\XC{\mathcal{X}}
\def\YC{\mathcal{Y}}

\def\dimy{d_y}
\def\dimx{d_x}
\def\Id{\operatorname{I}}
\def\vol{\operatorname{Vol}}

\title{Neural Optimal Transport Meets \\ Multivariate Conformal Prediction}

\author{%
  Vladimir Kondratyev \\
  Department of ML, MBZUAI, UAE \\
  \texttt{vladimir.kondratyev@mbzuai.ac.ae} \\
  \And
  Alexander Fishkov \\
  Department of ML, MBZUAI, UAE
  \And
  Nikita Kotelevskii \\
  Department of ML, MBZUAI, UAE
  \And
  Mahmoud Hegazy \\
  CMAP, École polytechnique, France
  \And
  Rémi Flamary \\
  CMAP, Ecole Polytechnique, Palaiseau, France
  \And
  Maxim Panov \\
  Department of ML, MBZUAI, UAE
  \And
  Eric Moulines \\
  CMAP, Ecole Polytechnique, Palaiseau, France \\
  \texttt{eric.moulines@mbzuai.ac.ae} \\
}

\iclrfinalcopy 
\definecolor{darkgreen}{rgb}{0.0,0.5,0.0}

\newtheorem{theorem}{Theorem}

\newcommand{\open}[1]{\left ( #1 \right )}
\newcommand{\closed}[1]{\left [#1 \right]}
\newcommand{\ens}[1]{\left \{ #1 \right \} }

\definecolor{asparagus}{rgb}{0.53, 0.66, 0.42}

\def\paramamor{\vartheta}
\def\rmd{\mathrm{d}}

\newcommandx{\ball}[3][1=]{\operatorname{B}(#2,#3)}

\def\pullback{\mathrm{pb}}

\def\test{\mathrm{train}}
\def\test{\mathrm{test}}
\def\cal{\mathrm{cal}}

\def\unipullback{\mathrm{rpb}}

\theoremstyle{remark}
\newtheorem{remark}{Remark}

\begin{document}

\maketitle

\begin{abstract}
  We propose a framework for \emph{conditional vector quantile regression} (CVQR) that combines neural optimal transport with amortized optimization, and apply it to multivariate conformal prediction. Classical quantile regression does not extend naturally to multivariate responses, while existing approaches often ignore the geometry of joint distributions. Our method parameterizes the conditional vector quantile function as the gradient of a convex potential implemented by an input-convex neural network, ensuring monotonicity and uniform ranks. To reduce the cost of solving high-dimensional variational problems, we introduce amortized optimization of the dual potentials, yielding efficient training and faster inference.  

  We then exploit the induced multivariate ranks for conformal prediction, constructing distribution-free predictive regions with finite-sample validity. Unlike coordinatewise methods, our approach adapts to the geometry of the conditional distribution, producing tighter and more informative regions. Experiments on benchmark datasets show improved coverage–efficiency trade-offs compared to baselines, highlighting the benefits of integrating neural optimal transport with conformal prediction.
\end{abstract}

\section{Introduction}
\label{sec:introduction}

Quantile regression has long been a cornerstone for modeling heterogeneous conditional distributions in the univariate setting~\citep{koenker1978regression,koenker2005quantile}. Estimating conditional quantiles rather than conditional means provides a more complete view of the conditional law of a response variable and has enabled advances in econometrics, statistics, and machine learning. Extending these ideas to multivariate responses, however, remains challenging: unlike the scalar case, $\RR^d$ lacks a natural total ordering, and early multivariate notions of quantiles, based on projections, spatial medians, or depth functions, inherit only part of the desirable scalar properties~\citep{chaudhuri1996geometric,hallin2021distribution}.

Recent progress in optimal transport has offered a principled definition of multivariate ranks and quantiles~\citep{chernozhukov2017monge,hallin2024multivariate}. By interpreting quantiles as transport maps from a reference distribution to the law of $Y$, these approaches recover distribution-free center-outward ranks and quantile regions that extend univariate order statistics to high dimensions. Building on this perspective, vector quantile regression (VQR; \citealp{carlier2016vector,carlier2017vector}) introduces conditional vector quantile functions (CVQFs), monotone maps that represent $Y$ as a transformation of latent uniform variables given covariates. CVQFs provide a rich yet tractable representation of conditional distributions, with promising extensions to nonlinear models~\citep{rosenberg2023fast,tallini2023continuous,del2025nonparametric}. However, practical estimation remains computationally demanding, often requiring large-scale optimal transport solvers.

In parallel, conformal prediction has emerged as a powerful framework for constructing predictive regions with finite-sample coverage guarantees~\citep{angelopoulos2023conformal}. While well studied in the univariate case, multivariate extensions are less developed and often reduce to coordinatewise methods that ignore the geometry of joint distributions~\citep{dheur2025unified}. Very recent work has begun to bridge this gap by incorporating optimal transport–based multivariate ranks into conformal prediction, yielding theoretically grounded multivariate prediction sets~\citep{thurin2025optimal,klein2025multivariate}.

In this paper, we leverage a neural optimal transport framework for learning CVQFs which allows to estimate  parametric cyclically monotone vector quantiles and multivariate ranks. Building on the resulting multivariate ranks, we use conformal prediction to produce distribution-free valid confidence regions that adapt to the geometry of conditional distributions in the multivariate setting. 

We make three main contributions:
\begin{enumerate}
  \item We present a \emph{neural optimal transport} framework for conditional vector quantile regression (CVQR), which utilities input-convex neural networks to estimate continuous vector quantile maps and multidimensional ranks; see \Cref{sec:Neural_ot_VQR}.

  \item We establish a principled integration of multivariate ranks and vector quantiles into conformal prediction, producing distribution-free predictive regions that adapt to the geometry of conditional distributions; see \Cref{sec:conformal_ot_neural_maps}.

  \item We experimentally show that amortized optimization yields gains in training and inference efficiency, while preserving the convexity and monotonicity guarantees of vector quantile functions; see \Cref{sec:optimal_transport_experiments}. The resulting conformal prediction sets outperform coordinatewise and representation-based baselines; see \Cref{sec:conformal_experiments}.
\end{enumerate}


\section{Constructing Multivariate Confidence Sets}
\label{sec:background}

	We start by informally introducing the conditional vector quantile and rank maps that aim to provide a flexible representation of the conditional law of $Y$ given $X$. 

\ecoparagraph{Quantiles in 1D and Confidence Sets.}
	Let us first consider the case of $Y \in \YC \subseteq \RR$. Let $(Y, X) \sim F_{YX}$ and let $F_{Y \mid X}$ be the conditional distribution of $Y$ given $X$. Then, the quantile function $Q_{Y \mid X}(\cdot, x)$ for any $\alpha \in [0, 1]$ outputs the corresponding quantile value $Q_{Y \mid X}(\alpha, x) \in \YC$ of distribution $F_{Y \mid X = x}$. The knowledge of the quantile function is instrumental for the construction of the confidence sets. For example, for a given $\alpha \in (0, 1)$ one can define $\CC_{\alpha}(x) = [Q_{Y \mid X}(\alpha / 2, x), Q_{Y \mid X}(1 - \alpha / 2, x)]$. By construction, this confidence set is valid, i.e. $\PP(Y \in \CC_{\alpha}(x) \mid X = x) = 1 - \alpha$.

	The inverse map $Q^{-1}_{Y \mid X}$ is sometimes called a rank function as for any value of variable $y$ it produces the value on an interval $Q^{-1}_{Y \mid X}(y, x) \in [0, 1]$ which can be interpreted as the rank of $y$ among its possible values with respect to the distribution $F_{Y \mid X = x}$. Importantly, the distribution of $Q^{-1}_{Y \mid X}(Y, X) \mid X = x$ is uniform on $[0, 1]$. In its turn, the knowledge of the rank function gives an alternative way to define the confidence interval $\CC^{\mathrm{pull}}_{\alpha}(x) = \{y\colon Q^{-1}_{Y \mid X}(y, x) \in [\alpha / 2, 1 - \alpha / 2]\}$. Obviously, $\CC_{\alpha}(x)$ and $\CC^{\mathrm{pull}}_{\alpha}(x)$ coincide. However, their functional forms give alternative views on how one can construct the confidence interval depending on having the access to the quantile or to the rank function.

\ecoparagraph{Multivariate Quantiles.}
  In the absence of a natural order on $\RR^d$ for $d > 1$, the definition of the multivariate quantile is not trivial. In this paper, we will study the definitions of multivariate quantiles based on optimal transport; see among others~\citep{carlier2016vector,hallin2021distribution,hallin2024multivariate}. We start by looking at a specific example, while the full exposition in \Cref{sec:conditional-VQR} is given below.

  Define $r_{1 - \alpha} \in \RR_+$ such that the Euclidean ball $\ball[d_y]{0}{r_{1-\alpha}} \subset \UC := \ball[d_y]{0}{1}$ satisfies the condition $\operatorname{Volume}(\ball[d_y]{0}{r_{1-\alpha}}) = 1 - \alpha$. Then, it can be shown (see \Cref{thm:carlier-existence} below) that there exists a map $Q_{Y \mid X}(u, x)$ and  a uniform random variable $U$ over the $\UC$, independent of $X$ such that $Y = Q_{Y \mid X}(U, X)$ almost surely. This map is called a \emph{vector quantile}. The corresponding inverse map $Q^{-1}_{Y \mid X}(y,x) \in \UC$ becomes a natural analogue of the \emph{rank function}.

	We can directly proceed with construction of confidence sets based on $Q^{-1}_{Y \mid X}(Y, X)$. For $x \in \XC$, define the \emph{pullback set}
  \begin{equation}
  \label{eq:pullback_ball}
    \CC^\pullback_{\alpha}(x) \coloneqq \bigl\{y\colon Q^{-1}_{Y \mid X}(y, x) \in \ball[d_y]{0}{r_{1-\alpha}}\bigr\}.
  \end{equation}
  Using the properties of quantile and rank functions we get that 
  \[
  	\PP(Y \in \CC^\pullback_{\alpha}(X))= \PP_{(U, X) \sim F_U \otimes F_X}(\|Q^{-1}_{Y \mid X}(Q_{Y \mid X}(U, X), X)\| \leq r_{1-\alpha}) = \PP_{U \sim F_U}(\|U\| \leq r_{1-\alpha}).
  \]
  Hence, the coverage of the pullback set $\CC^\pullback_{\alpha}(x)$ is exactly $1-\alpha$ as required.

\ecoparagraph{Conformalized Confidence Sets.}
	In practice, we can only have access to the estimator $\widehat{Q}^{-1}_{Y\mid X}$ of $Q^{-1}_{Y\mid X}$.
  One can consider plug-in confidence sets constructed directly from these estimators. However, such sets fail to guarantee coverage as generally $\widehat{Q}^{-1}_{Y\mid X} \neq Q^{-1}_{Y\mid X}$. Consequently, the coverage of $\CC^{\mathrm{pull}}_{\alpha}(X)$ may be miscalibrated, motivating the use of conformal prediction. Conformal prediction corrects such miscalibration, providing finite-sample, distribution-free \emph{marginal} coverage guarantees. Specifically, given a calibration set $\DC_{\mathrm{cal}}=\{(X_i,Y_i)\}_{i=1}^n$ independent of the training data, consider a score \(S_i = \|\widehat{Q}^{-1}_{Y\mid X}(Y_i,X_i)\|, i = 1, \dots, n\). Then, split-conformal prediction constructs a set $\hat{\CC}^\pullback_{\alpha}(X_{\mathrm{test}})\subseteq \YC$ for a new test point $(X_{\mathrm{test}}, Y_{\mathrm{test}})$ based on the scores $\{S_{i}\}_{i = 1}^n$ and $S_{\mathrm{test}} = \|\widehat{Q}^{-1}_{Y\mid X}(Y_{\mathrm{test}}, X_{\mathrm{test}})\|$ such that
  \[
    \PP\{Y_{\mathrm{test}}\in \hat{\CC}^\pullback_{\alpha}(X_{\mathrm{test}})\} \geq 1 - \alpha,
  \]
  under the assumption that $(X_1,Y_1),\dots,(X_n,Y_n),(X_{\mathrm{test}},Y_{\mathrm{test}})$ are exchangeable~\citep{romano2019conformalized,angelopoulos2023conformal}. The other choices of the score are possible, see discussion in Section~\ref{sec:conformal_ot_neural_maps}. 

  In what follows, we discuss various approaches to construct \(\widehat{Q}_{Y\mid X}\) and \(\widehat{Q}^{-1}_{Y\mid X}\) based on neural optimal transport.

\section{Vector Quantile Regression via Optimal Transport}
\label{sec:conditional-VQR}
  We now proceed to recall the mathematical underpinnings of vector quantile regression and multidimensional ranks, where we follow closely the formulation of \citet{carlier2016vector,hallin2021distribution}.  Let $(Y,X)$ be a random vector on a complete probability space $(\Omega,\AC,\PP)$, where $Y\in\RR^{\dimy}$ and $X\in \RR^{\dimx}$. Denote by $F_{YX}$ the joint law of $(Y,X)$, by $F_{Y\mid X}$ the conditional law of $Y$ given $X$, and by $F_X$ the marginal of $X$. Let $U$ be a random vector  on $(\Omega,\AC,\PP)$ with reference distribution $F_U$. We write $\YC, \XC, \UC, \YC \times \XC, \UC \times \XC$ for the supports of $F_Y, F_X, F_U, F_{YX}, F_{UX}$, and $\YC_x$ for the support of $F_{Y\mid X=x}$. Norms are Euclidean on $\RR^d$.

  The following basic properties of distributions $F_U$ and $F_{Y\mid X}$ are required for the construction of OT-based vector quantiles and rank functions.
  \begin{assumption}
  \label{assum:density-reference}
    The reference distribution $F_U$ admits a density $f_U$ with respect to Lebesgue measure on $\RR^d$, with convex support $\UC \subseteq \RR^d$.
  \end{assumption}
  Typical choices for $F_U$ include the uniform distribution on $[0, 1]^d$, the Gaussian $\NC(0, I_d)$, or any distribution on $\RR^d$ with convex support.

  \begin{assumption}
  \label{assum:absolute_continuity_cond_distr}
    For each $x \in \XC$, the conditional law $F_{Y \mid X}(\cdot,x)$ has a density $f_{Y \mid X}(\cdot, x)$.
  \end{assumption}

  Our goal is to construct a push-forward of $U\sim F_U$ to $Y$ such that the conditional law of $Y\mid X$ equals $F_{Y\mid X}$. In the multivariate setting, monotonicity requires the map to be the gradient of a convex function, a natural generalization of scalar monotonicity. This motivates the \emph{conditional vector quantile function} (CVQF).
  \begin{theorem}[\citet{carlier2016vector}, Theorems~2.1 \& 2.2]
  \label{thm:carlier-existence}
    Suppose Assumption~\ref{assum:density-reference} holds. Then:
    \begin{enumerate}[label=(\roman*)]
      \item For each $x\in\XC$, there exists a measurable map $u\mapsto Q_{Y\mid X}(u,x)$, unique $F_U$-a.e., which is the gradient of a convex function and pushes $F_U$ forward to $F_{Y\mid X=x}$.
      
      \item Consequently, there exists $U\sim F_U$ such that $Y=Q_{Y\mid X}(U,X)$ a.s.\ with $U\mid X\sim F_U$.

      \item Additionally, if~ \Cref{assum:absolute_continuity_cond_distr} holds, then there exists a measurable inverse map $Q^{-1}_{Y\mid X}(y,x) \in \UC$ such that $Q^{-1}_{Y\mid X}(Q_{Y\mid X}(u,x),x) = u$ for $F_U$-a.e. $u$, and $\PP(Q^{-1}_{Y\mid X}(Y,X) \le u \mid X=x) = F_U(u)$.
    \end{enumerate}
  \end{theorem}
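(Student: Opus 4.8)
The plan is to handle the three claims in sequence, reducing (i) to a classical optimal-transport existence/uniqueness result applied conditionally on $X=x$, (ii) to a measurable-selection argument combined with the transfer theorem, and (iii) to the invertibility of Brenier maps between absolutely continuous measures.

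For (i), I would fix $x\in\XC$ and consider the two Borel probability measures $\mu\coloneqq F_U$ and $\nu_x\coloneqq F_{Y\mid X=x}$ on $\RR^{\dimy}$. By \Cref{assum:density-reference}, $\mu$ is absolutely continuous with respect to Lebesgue measure, hence gives zero mass to every Borel set of Hausdorff dimension at most $\dimy-1$. McCann's theorem (the version of Brenier's theorem requiring no moment assumptions) then produces a convex function $\psi(\cdot,x)\colon\RR^{\dimy}\to\RR\cup\{+\infty\}$, finite and differentiable $\mu$-a.e., whose gradient $Q_{Y\mid X}(\cdot,x)\coloneqq\nabla\psi(\cdot,x)$ pushes $\mu$ forward to $\nu_x$, and any two maps that are simultaneously gradients of convex functions and push $\mu$ to $\nu_x$ coincide $\mu$-a.e. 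This is exactly claim (i).

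For (ii), I would first upgrade the pointwise-in-$x$ construction to a jointly Borel one: disintegrating $F_{YX}$ yields a Markov kernel $x\mapsto\nu_x$ measurable in $x$, and — using stability of optimal Kantorovich potentials under weak convergence of the target marginal together with a normalization that pins down the potential up to nothing — the assignment $x\mapsto\psi(\cdot,x)$ can be taken measurable, e.g.\ via the Kuratowski--Ryll-Nardzewski selection theorem applied to the corresponding closed-valued measurable multifunction, so that $(u,x)\mapsto Q_{Y\mid X}(u,x)=\nabla\psi(u,x)$ is jointly Borel; the $\mu$-a.e.\ uniqueness from (i) guarantees this is the right object. Then, taking (possibly after enlarging the probability space) a random vector $U\sim F_U$ independent of $X$, the fact that $Q_{Y\mid X}(\cdot,x)_{\#}F_U=F_{Y\mid X=x}$ shows $(Q_{Y\mid X}(U,X),X)\overset{d}{=}(Y,X)$; the transfer theorem (noise-outsourcing lemma) then lets me realize $U$ so that in fact $Y=Q_{Y\mid X}(U,X)$ almost surely while keeping $U\sim F_U$ and $U\perp X$, i.e.\ $U\mid X\sim F_U$, which is (ii).

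For (iii), adding \Cref{assum:absolute_continuity_cond_distr} makes $\nu_x$ absolutely continuous as well, so the Brenier map $\nabla\psi(\cdot,x)$ from $\mu$ to $\nu_x$ is invertible $\mu$-a.e., its inverse being the a.e.-defined gradient of the Legendre conjugate $\psi^\ast(\cdot,x)$; setting $Q^{-1}_{Y\mid X}(\cdot,x)\coloneqq\nabla\psi^\ast(\cdot,x)$, this is the Brenier map from $\nu_x$ to $\mu$, hence pushes $F_{Y\mid X=x}$ forward to $F_U$ and takes values in $\UC$ for $\nu_x$-a.e.\ $y$. Consequently $Q^{-1}_{Y\mid X}(Q_{Y\mid X}(u,x),x)=u$ for $F_U$-a.e.\ $u$, and since $Q^{-1}_{Y\mid X}(\cdot,x)_{\#}F_{Y\mid X=x}=F_U$ we get $\PP(Q^{-1}_{Y\mid X}(Y,X)\le u\mid X=x)=F_U(u)$; joint measurability of $(y,x)\mapsto Q^{-1}_{Y\mid X}(y,x)$ follows by the same selection argument as in (ii) with the roles of $\mu$ and $\nu_x$ exchanged. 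The main obstacle is not any single analytic estimate but the measurability bookkeeping in (ii): the conditional Brenier maps are defined only $\mu$-a.e.\ and only separately for each $x$ by (i), and stitching them into a single jointly Borel function of $(u,x)$ requires carefully combining the disintegration of $F_{YX}$, stability of optimal potentials, and a measurable selection theorem; the rest is a direct appeal to McCann's theorem and to the transfer theorem.
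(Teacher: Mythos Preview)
The paper does not supply its own proof of this theorem: it is stated with attribution to \citet{carlier2016vector}, Theorems~2.1 and~2.2, and is used as a black box throughout. There is therefore no in-paper argument to compare against; your outline is being measured against the original source.

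Your sketch is faithful to that original argument. Part (i) is exactly McCann's moment-free version of Brenier's theorem applied at each fixed $x$; part (iii) is the standard invertibility of Brenier maps once both marginals are absolutely continuous, with the inverse given by the gradient of the Legendre conjugate. For part (ii), the two ingredients you name --- joint measurability of $(u,x)\mapsto Q_{Y\mid X}(u,x)$ obtained via a measurable selection from the disintegration $x\mapsto F_{Y\mid X=x}$, and then a transfer/noise-outsourcing step to produce the almost-sure representation $Y=Q_{Y\mid X}(U,X)$ with $U\mid X\sim F_U$ --- are precisely the ones used by Carlier, Chernozhukov and Galichon. You also correctly flag the measurable-selection step as the only genuinely delicate point; the rest is direct. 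One small wording issue: in (ii) you first ``take'' a $U\perp X$ and then invoke the transfer theorem to ``realize'' it so that $Y=Q_{Y\mid X}(U,X)$ a.s.; cleaner is to start from the given $(Y,X)$ and apply the transfer theorem directly to produce $U$ on (an enlargement of) the original space, rather than introducing and then re-realizing an auxiliary $U$. This is cosmetic and not a gap.
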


  %
  The map $y \mapsto Q^{-1}_{Y\mid X}(y,x)$ is the \emph{conditional vector rank}. For $d=1$ it coincides with the conditional CDF, but not for $d>1$~\citep{hallin2021distribution,hallin2024multivariate,del2025nonparametric}.


  Finally, the following assumption is needed to ensure the efficient computation of $Q_{Y \mid X}$ and $Q^{-1}_{Y \mid X}$.
  \begin{assumption}
  \label{assum:finite-2nd-moment}
    $Y$ and $U$ have finite second moments: $\EE[\|Y\|^2]<\infty$ and $\EE[\|U\|^2]<\infty$.
  \end{assumption}

  Under this, the CVQF solves a conditional optimal transport problem:
  $\min_V \EE[\|Y-V\|^2]\quad\text{s.t. }V\mid X\sim F_U$, equivalently $\max_V \EE[V^\top Y]$ under the same constraint. The dual program is
  \begin{equation}
  \label{eq:dual_vqr}
    \min_{\psi,\varphi} \EE[\varphi(V,X)] + \EE[\psi(Y,X)]
    \quad\text{s.t.}\quad \varphi(u,x)+\psi(y,x)\ge u^\top y,
  \end{equation}
  where $V$ is any vector such that \(V \mid X \sim F_U\). The following properties for the solution of~\eqref{eq:dual_vqr} can be stated.

  \begin{theorem}[\citet{carlier2016vector}, Theorem~2.3]
  \label{theorem:existance_of_solution}
    Suppose Assumptions~\ref{assum:density-reference}--\ref{assum:finite-2nd-moment} hold. Then,
    \begin{enumerate}[label=(\roman*)] 
      \item There exist potentials $\varphi(u, x)$ and $\psi(y, x) = \varphi^*(y, x)$ solving~\eqref{eq:dual_vqr}, where for each $x$, $u\mapsto\varphi(u, x)$ and $y\mapsto\psi(y, x)$ are convex and Legendre conjugates:
        \begin{equation}
        \label{eq:legendre-conjugate}
          \varphi(u, x) = \sup_{y \in \YC}\{u^\top y-\psi(y, x)\}, 
          \quad 
          \psi(y, x) = \varphi^*(y, x) =\sup_{u \in \UC}\{u^\top y-\varphi(u, x)\}.
        \end{equation}

      \item The conditional vector quantile is $Q_{Y\mid X}(u, x)=\nabla_u\varphi(u, x)$ for $F_U$-a.e.\ $u$.

      \item The conditional vector rank is $Q_{Y\mid X}^{-1}(y, x)=\nabla_y\psi(y, x)$ for $F_{Y\mid X}(\cdot, x)$-a.e.\ $y$.

      \item These maps are inverses: for each $x$, $\nabla_y\psi(\nabla_u\varphi(u, x), x)=u$, $\nabla_u\varphi(\nabla_y\psi(y, x), x)=y$, for $F_U$-a.e.\ $u$ and $F_{Y\mid X}(\cdot, x)$-a.e.\ $y$. 
    \end{enumerate}
  \end{theorem}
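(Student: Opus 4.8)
The plan is to derive all four claims from Kantorovich duality for the bilinear-cost transport problem, applied fiberwise in $x$ through disintegration, combined with Brenier's polar factorization and the almost-everywhere differentiability of convex functions; this mirrors \citet{carlier2016vector}.

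\emph{Setup and claim (i).} Under \Cref{assum:finite-2nd-moment}, expanding $\|Y-V\|^2=\|Y\|^2-2V^\top Y+\|V\|^2$ shows that minimizing $\EE[\|Y-V\|^2]$ over $V$ with $V\mid X\sim F_U$ is equivalent to maximizing $\EE[V^\top Y]$ under the same constraint, and that the common value is finite. Disintegrating along $F_X$ decouples this into the classical Monge--Kantorovich problems $\max_{\pi_x}\int u^\top y\,\rmd\pi_x(u,y)$ over couplings $\pi_x$ of $F_U$ and $F_{Y\mid X=x}$, for $F_X$-a.e.\ $x$. For the bilinear cost $u^\top y$ the $c$-transform coincides with Legendre--Fenchel conjugation, so Kantorovich duality supplies, for each such $x$, an optimal admissible pair which, after double conjugation, may be taken to satisfy $\psi(\cdot,x)=\varphi^*(\cdot,x)$ and $\varphi(\cdot,x)=\psi^*(\cdot,x)$; being Legendre conjugates, both are convex and lower semicontinuous, and $\varphi(u,x)+\psi(y,x)\ge u^\top y$ holds everywhere with equality $\pi_x$-a.e. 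Reassembling the fibers (the measurability point is discussed below) gives potentials on $\UC\times\XC$ and $\YC\times\XC$ attaining the value of~\eqref{eq:dual_vqr}, which is claim~(i).

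\emph{Claims (ii)--(iv).} Fix such an $x$. By \Cref{assum:density-reference} the measure $F_U$ is absolutely continuous, so $\varphi(\cdot,x)$ is differentiable $F_U$-a.e.\ (Rademacher, since a finite convex function is locally Lipschitz on the interior of its domain). At a differentiability point $u$, the equality $\varphi(u,x)+\psi(y,x)=u^\top y$ on $\mathrm{supp}(\pi_x)$ together with $\psi(\cdot,x)=\varphi^*(\cdot,x)$ forces $y=\nabla_u\varphi(u,x)$; hence $\pi_x$ is the law of $(U,\nabla_u\varphi(U,x))$, so $\nabla_u\varphi(\cdot,x)$ is a gradient-of-convex map pushing $F_U$ to $F_{Y\mid X=x}$ and therefore coincides $F_U$-a.e.\ with the map $Q_{Y\mid X}(\cdot,x)$ of \Cref{thm:carlier-existence} by the uniqueness stated there; this is~(ii). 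Symmetrically, \Cref{assum:absolute_continuity_cond_distr} makes $F_{Y\mid X=x}$ absolutely continuous, $\psi(\cdot,x)$ is differentiable $F_{Y\mid X=x}$-a.e., and running the same argument in reverse shows $\nabla_y\psi(\cdot,x)$ pushes $F_{Y\mid X=x}$ to $F_U$. For~(iv), recall that for a convex l.s.c.\ function and its conjugate, $y\in\partial_u\varphi(u,x)\iff u\in\partial_y\psi(y,x)$; specializing to differentiability points and using that $\nabla_u\varphi(\cdot,x)$ and $\nabla_y\psi(\cdot,x)$ transport $F_U$ and $F_{Y\mid X=x}$ onto each other (so their differentiability sets match up to null sets), one obtains $\nabla_y\psi(\nabla_u\varphi(u,x),x)=u$ for $F_U$-a.e.\ $u$ and $\nabla_u\varphi(\nabla_y\psi(y,x),x)=y$ for $F_{Y\mid X=x}$-a.e.\ $y$. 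Finally, since $Q^{-1}_{Y\mid X}(\cdot,x)$ is characterized in \Cref{thm:carlier-existence} as the inverse of $Q_{Y\mid X}(\cdot,x)$, the previous display identifies it with $\nabla_y\psi(\cdot,x)$, giving~(iii).

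\emph{Main obstacle.} The delicate part is not any individual fiber---each is a textbook Brenier/Kantorovich argument---but the \emph{measurable dependence on $x$ and the bridge between the fiberwise statements and the global program}~\eqref{eq:dual_vqr}. One must (a)~select the optimal dual pairs jointly measurably in $x$, e.g.\ via the Kuratowski--Ryll-Nardzewski selection theorem applied to the closed-valued multifunction $x\mapsto\{$optimal $(\varphi(\cdot,x),\psi(\cdot,x))\}$, so that $\varphi$ and $\psi$ are genuine functions on $\UC\times\XC$ and $\YC\times\XC$; (b)~verify that the fiberwise dual values integrate to the value of~\eqref{eq:dual_vqr} with no duality gap; and (c)~promote statements of the form ``for $F_X$-a.e.\ $x$, for $F_U$-a.e.\ $u$'' to $F_{UX}$-a.e.\ statements via Fubini. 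I would present the fiberwise geometry in full and defer these selection/disintegration technicalities to \citet{carlier2016vector}.
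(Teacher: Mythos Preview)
The paper does not prove this theorem at all: it is quoted verbatim as \citet{carlier2016vector}, Theorem~2.3, and used as a black box. Your sketch---fiberwise Kantorovich duality with the bilinear cost, Brenier's polar factorization, differentiability of convex potentials, and a measurable-selection argument for the dependence on $x$---is exactly the route taken in that reference (and you already flag this), so there is nothing to compare on the paper's side; your proposal is a faithful outline of the original proof and correctly isolates the measurable-selection/disintegration step as the only nontrivial technicality.
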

  This theorem gives us necessary tools for the practical solution of OT problem~\eqref{eq:dual_vqr}.

\section{Neural Optimal Transport for VQR}
\label{sec:Neural_ot_VQR}

 We now introduce the proposed approach for learning continuous Neural VQR models. First, we reformulate the optimization problem as a function of a unique (convex) potential using the conditional $c$-transform. We then discuss how this problem can be solved in practice using Partially Input Convex Neural Networks (PICNNs; \citealp{amos2017input}) and how their training can be accelerated by amortized optimization.

\ecoparagraph{Neural parameterization and semi-dual formulation.}
  First, following~\citet{taghvaei20192,makkuva2020optimal,amos2023amortizingconvexconjugatesoptimal}, we propose to reformulate the Monge-Kantorovich dual problem~\eqref{eq:dual_vqr} as an optimization problem over a parametric family of potentials $\varphi_\theta$ with parameters $\theta$. Since $\varphi_\theta$ should be convex in its first argument, it is ensured that one can estimate a unique potential using the Fenchel-Legendre conjugacy in equation~\eqref{eq:legendre-conjugate} (also called c-transform in the OT literature). 
  We introduce for each $x \in \XC$ the conjugate of a pointwise potential $\varphi_\theta(\cdot, x)\colon \UC \to \YC$ as
  \begin{align}
  \label{eq:conditional-convex-conjugate}
     J_{\varphi_\theta(\cdot, x)}(u, y) &= u^T y - \varphi_\theta(u, x), \\
    \varphi^*_\theta(y, x) = J_{\varphi_\theta(\cdot, x)}\bigl(\check{u}_{\varphi_\theta(\cdot, x)}(y), y\bigr), &\quad \check{u}_{\varphi_\theta(\cdot, x)}(y) = \arg\max_{u \in \UC} J_{\varphi_\theta(\cdot, x)}(u,y).
  \label{eq:conditional-convex-conjugate2}
  \end{align}
  With these notations, the problem~\eqref{eq:dual_vqr} can be reformulated as
  the minimization of $\VC(\theta)$, defined as
  \begin{equation}
  \label{eq:dual-objective-1}
    \VC(\theta):= \EE_{(U,X) \sim F_U \otimes F_X}\left[\varphi_\theta(U,X)\right]+\EE_{(Y,X) \sim F_{YX}}\left[\varphi_\theta^*(Y,X)\right].
  \end{equation}
  Here, $F_U \otimes F_X$ denotes the product measure of $F_U$ and $F_X$, corresponding to independent sampling of $U \sim F_U$ and $X \sim F_X$.
  The optimal parameter, can be found by taking gradient steps of the dual objective with respect to $\theta$.
  The derivative goes through the loss and the Fenchel-Legendre conjugate is
  obtained by applying Danskin's theorem~\citep{danskin1967theory} and only requires the
  derivative of the potential 
  \[
    \nabla_\theta \VC(\theta)= \EE_{(U, X) \sim F_U \otimes F_X}[\nabla_{\theta} \varphi_\theta(U,X)] - 
    \EE_{(Y,X) \sim F_{YX}}[\nabla \varphi_\theta(u,X)\vert_{u=\check{u}_{\varphi_\theta(\cdot, X)}(Y)}].
  \]
  \begin{remark}
    Above we discuss the optimization of the dual potential $\varphi_\theta(\cdot, x)$ which is linked to $F_{U}$, with its conjugate $\varphi_\theta^*(\cdot, x)$ is linked to $F_{Y|X}(\cdot \mid X = x)$. But in practice, due to the symmetry of~\eqref{eq:dual_vqr}, one can instead use $\psi_\theta(\cdot, x)$. In our experiments we investigate both strategies.
  \end{remark}

\ecoparagraph{Neural Quantile Regression with PICNNs (C-NQR).}
  The convexity of $\varphi_\theta(\cdot, x)$ with respect to the first argument can be achieved~\citep{bunne2022supervised} by the usage of PICNNs~\citep{amos2017input}). However, the remaining challenge in solving the optimization problem in~\eqref{eq:dual-objective-1} arises from the fact that the conjugate $\varphi_\theta^*(\cdot, x)$ must be computed for each $x$ in the mini-batch. As a first strategy, we propose to do this exactly with an L-BFGS solver~\citep{liu1989limited}. The parameters of the PICNN potential $\varphi_\theta$ can be optimized using stochastic gradient descent (SGD); see \Cref{alg:neural_quantile_regression_training} in \Cref{sec:impl:train-loops} for implementation details. This approach is conceptually simple and uses existing optimization tools. However, it can be computationally intensive due to the repeated optimization required to compute the conjugates, especially for large mini-batches or high-dimensional data.

\ecoparagraph{Amortized Neural Quantile Regression (AC-NQR).}
  To reduce the computational cost of repeatedly solving the optimization problem~\eqref{eq:conditional-convex-conjugate2} to compute the conjugates, we propose an amortized optimization. The idea is to learn a predictor that approximates the solution of the conjugate problem, thus speeding up the inner optimization and training process. This strategy has been shown to be effective in the non-conditional case by~\citet{amos2023amortizingconvexconjugatesoptimal}.

  We extend this approach to the conditional case by introducing an amortization model $\tilde{u}_{\paramamor}(y, x)$ parameterized by $\paramamor$ that maps $(y, x)$ to a point that should ideally be close to the true solution $\check{u}_{\varphi_\theta(\cdot, x)}(y)$ in~\eqref{eq:conditional-convex-conjugate2}:
  \[
    \tilde{u}_{\paramamor}(y, x) \simeq \check{u}_{\varphi_\theta(\cdot, x)}(y).
  \]
  Note that different strategies have been proposed for the amortization model, but we will only focus on the one based on PICNNs such as in~\citep{makkuva2020optimal,korotin2019wasserstein}. The amortization model is trained jointly with the potential $\varphi_\theta$ by optimizing a quadratic loss that makes $\tilde{u}_{\paramamor}(y, x)$ to be close to $\check{u}_{\varphi_\theta(\cdot, x)}(y)$; see \Cref{alg:amoritzed_neural_quantile_regression_training} in \Cref{sec:impl:train-loops} for implementation details. This approach assumes that the amortization model evolves on a faster timescale than the potential $\varphi_\theta$, ensuring that its updates can track the slower dynamics of $\varphi_\theta$ during training, following the standard two-time-scale approximation~\citep{konda2004convergence,borkar2008stochastic}.

\ecoparagraph{Entropic regularized Neural Quantile Regression (EC-NQR).}
  Note that the two approaches discussed above requires the solution of a convex optimization problem to compute the exact conjugates, which becomes computationally intensive in high-dimensions. An alternative approach is to employ entropic regularization, enabling the use of stochastic gradient solvers~\citep{genevay2016stochastic}, which scale well but introduce bias that may distort the geometry of quantile maps~\citep{rosenberg2023fast}. Using a neural network to estimate the dual potentials  was considered  by~\citet{seguy2018large} for the non-conditional case and we propose to extend it to the conditional case for Neural VQR.

  This is done by adding an entropic regularization term to the primal OT problem, which smooths the problem and provides a closed-form solution for the conjugate (the argmax in~\eqref{eq:conditional-convex-conjugate2} becomes a softmax). This approach replaces the convex optimization required for conjugate optimization by an expectation that can be approximated with sampling; see \Cref{alg:entropic_semi_dual_training} in \Cref{sec:impl:train-loops} for implementation details. 
  More details on this approach and related works can be found in \Cref{supp:entropy_regularized_neural_optimal_transport} and \Cref{sec:SOTA} respectively.

\section{Conformal Prediction with OT Neural Maps}
\label{sec:conformal_ot_neural_maps}

  In this section, we demonstrate the use of our neural OT framework in constructing intrinsically adaptive confidence sets with CP. The key idea is to exploit multivariate quantile and rank maps learned by conditional neural OT as a building block for defining conformity scores and constructing calibrated prediction regions.  Let $(Y,X) \sim F_{YX}$ and $\alpha \in (0,1)$ and denote by $\widehat{Q}^{-1}_{Y \mid X}$ a proxy for the true associated vector rank function $Q^{-1}_{Y \mid X}$ as in \Cref{thm:carlier-existence}.

\ecoparagraph{Generalizing conformalized quantile regression.}
  In the univariate case, conformalized quantile regression (CQR; \citealp{romano2019conformalized}) replaces a nominal quantile with the empirical $(1-\alpha)$-quantile of residuals, ensuring distribution-free, finite-sample coverage. The same principle extends to the plug-in pullback set in~\eqref{eq:pullback_ball}. Define conformity scores
  \[
    S_i = \|\widehat{Q}^{-1}_{Y\mid X}(Y_i,X_i)\|, \qquad (Y_i, X_i) \in \DC_{\mathrm{cal}}.
  \]
  Let $S_{(1)} \leq \cdots \leq S_{(n)}$ denote the order statistics, set $k=\lceil (n+1)(1-\alpha)\rceil$, and $\rho_{1 - \alpha} = S_{(k)}$. The conformal set
  \[
    \hat{\CC}^\pullback_{\alpha}(x) = \{y\colon \widehat{Q}^{-1}_{Y\mid X}(y, x)\in \ball{0}{\rho_{1 - \alpha}}\}
  \]
  then guarantees $\PP_{(Y,X) \sim F_{YX}}\bigl(Y \in \hat{\CC}^\pullback_{\alpha}(X)\bigr) \geq 1 - \alpha$. We now show that this construction of confidence sets is optimal when the Jacobian of the inverse transport admits a radial structure.

  \begin{theorem}[Volume–optimality of pullback balls under radiality]
  \label{thm:radial_volume_optimality}
    Fix $x \in \XC$ and reference distribution $F_U(u) = \phi(\|u\|)$ for a strictly decreasing $\phi\colon [0,\infty) \to (0,\infty)$ on $\UC$, under the assumptions of \Cref{thm:carlier-existence}, let $Q_{Y|X}$ and $Q^{-1}_{Y|X}$ be the vector quantile and multivariate rank functions. Assume that there exists $j_x$ such that for all $y$ in the support of $F_{Y|X}$, it holds
    \begin{equation*}
      \det \closed{\nabla_y Q^{-1}_{Y|X}(y, x)} = j_x\open{\|Q^{-1}_{Y|X}(y, x)\|},
    \end{equation*}
    and the function $r \mapsto \phi(r) \, j_x(r)$ is strictly decreasing. Let $r_\alpha > 0$ be the unique radius satisfying $\mu(B_{r_\alpha}) = 1 - \alpha$, where $\mu$ is the law corresponding to $F_U$ and $B_r=\{u\colon \|u\| \le r\}$. Define the pullback ball $\CC^\pullback_\alpha(x)\coloneqq \ens{y\colon \|Q^{-1}_{Y|X}(y, x)\| \leq r_\alpha}$. Then, $\CC^\pullback_\alpha(x)$ minimizes Lebesgue volume among all sets with $x$-conditional coverage of at least $1 - \alpha$, i.e., for every measurable $A \subset \YC_x$ with $\PP\{Y \in A \mid X = x\} \ge 1 - \alpha$, $\vol\bigl(\CC^\pullback_\alpha(x)\bigr) \le \vol(A)$.
  \end{theorem}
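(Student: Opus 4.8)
The plan is to reduce the statement to the classical minimum‑volume (highest‑density‑region) principle: I would show that the pullback ball $\CC^\pullback_\alpha(x)$ is exactly a super‑level set of the conditional density $f_{Y\mid X}(\cdot,x)$, and that it carries conditional mass precisely $1-\alpha$; the minimum‑volume property then follows from a standard cut‑and‑paste argument. The bridge between "pullback ball" and "density level set" is the change‑of‑variables (Monge--Ampère) identity for the rank map, and this is where the radiality hypothesis enters.

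First I would set $T\coloneqq Q^{-1}_{Y\mid X}(\cdot,x)=\nabla_y\psi(\cdot,x)$, which by \Cref{thm:carlier-existence} and \Cref{theorem:existance_of_solution} is the Brenier map transporting $F_{Y\mid X=x}$ onto $\mu=F_U$, with inverse $Q_{Y\mid X}(\cdot,x)$. Since $\psi(\cdot,x)$ is convex it is twice differentiable $\YC_x$‑a.e.\ (Alexandrov), and the Monge--Ampère equation for optimal maps between absolutely continuous measures gives, for a.e.\ $y\in\YC_x$,
\[
  f_{Y\mid X}(y,x)=\phi\bigl(\|T(y)\|\bigr)\,\det\nabla_y T(y),
\]
where $\nabla_yT=\nabla_y^2\psi(\cdot,x)\succeq 0$, so the determinant is nonnegative — indeed strictly positive a.e., because the left‑hand side is positive on $\YC_x$. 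Substituting the radiality assumption $\det\nabla_yT(y)=j_x(\|T(y)\|)$ yields $f_{Y\mid X}(y,x)=g(\|T(y)\|)$ with $g(r)\coloneqq\phi(r)\,j_x(r)$ strictly decreasing by hypothesis.

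Next, with $t_\alpha\coloneqq g(r_\alpha)>0$, strict monotonicity of $g$ gives $\|T(y)\|\le r_\alpha\iff f_{Y\mid X}(y,x)\ge t_\alpha$, so $\CC^\pullback_\alpha(x)=T^{-1}(B_{r_\alpha})=\{y\in\YC_x:f_{Y\mid X}(y,x)\ge t_\alpha\}$; and since $T$ pushes $F_{Y\mid X=x}$ to $\mu$, its conditional mass is $\mu(B_{r_\alpha})=1-\alpha$. It then remains to run the standard comparison: for measurable $A\subseteq\YC_x$ with $\PP(Y\in A\mid X=x)\ge 1-\alpha$, writing $f=f_{Y\mid X}(\cdot,x)$ and $B=\CC^\pullback_\alpha(x)$,
\[
  0\le \int_A f-\int_B f=\int_{A\setminus B}f-\int_{B\setminus A}f\le t_\alpha\bigl(\vol(A\setminus B)-\vol(B\setminus A)\bigr),
\]
using $f<t_\alpha$ on $A\setminus B\subseteq\YC_x\setminus B$ and $f\ge t_\alpha$ on $B\setminus A\subseteq B$; dividing by $t_\alpha>0$ and adding $\vol(A\cap B)$ to both sides gives $\vol(A)\ge\vol(B)$, which is the claim.

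The main obstacle I anticipate is making the change‑of‑variables identity fully rigorous: the Brenier potential $\psi(\cdot,x)$ is only twice differentiable almost everywhere, so one must invoke the a.e.\ Monge--Ampère equation for optimal transport maps and confirm that $\det\nabla_yT>0$ a.e., so that the hypothesis "$r\mapsto\phi(r)j_x(r)$ strictly decreasing" genuinely controls the level‑set geometry of the conditional density rather than that of an arbitrary a.e.\ representative. Comparable care is needed near $\partial\YC_x$ and at radii where $j_x$ could degenerate, which is exactly where positivity and strict monotonicity of $g$ at $r_\alpha$ are used; everything else is elementary measure theory.
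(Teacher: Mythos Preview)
Your proposal is correct and follows essentially the same route as the paper: both express $f_{Y\mid X}(\cdot,x)$ via the change-of-variables formula for the rank map, use the radiality hypothesis to recognize the pullback ball as the density superlevel (HPD) set of mass exactly $1-\alpha$, and then conclude volume optimality from the HPD principle. The paper resolves your anticipated regularity obstacle by invoking \citet[Corollary~2.1]{carlier2016vector} to get that $Q^{-1}_{Y\mid X}(\cdot,x)$ is $C^1$ outright, rather than working with Alexandrov a.e.\ second derivatives; and where you spell out the cut-and-paste comparison explicitly, the paper simply identifies $\CC^{\pullback}_\alpha(x)$ with the level-$(1-\alpha)$ HPD region and appeals to its well-known volume minimality.
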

  Equivalently, \Cref{thm:radial_volume_optimality} shows that $\CC^\pullback_\alpha(x)$ is the highest probability density (HPD) region for $Y \mid X = x$ at level $1-\alpha$. A noteworthy specialization, where the assumptions of \Cref{thm:radial_volume_optimality} are met, is the \emph{elliptical} case (including Gaussian) with $F_{Y\mid X}$ and $F_U$ belonging to the same elliptical family. We defer the proof and additional details to \Cref{sec:supp:conf_theory}.

\ecoparagraph{Re-ranked pullback sets.}
  This construction is effective only if the scores $S_i$ capture isotropic structure. Indeed, $\hat{\CC}^\pullback_{\alpha}(x)$ is the preimage of a centered Euclidean ball in $\UC$, implicitly assuming that the conditional distribution of $U = \widehat{Q}^{-1}_{Y \mid X}(Y,X)$ is radially symmetric. When $\widehat{Q}^{-1}_{Y \mid X}$ is misspecified, however, the ranks may be anisotropic, and Euclidean radii become unreliable. We note that the vector ranks $\{U_i = \widehat{Q}^{-1}_{Y \mid X}(Y_i, X_i)\}_{i=1}^n$ can themselves be interpreted as multivariate score functions and as such be combined with the OT-CP approach of \citet{thurin2025optimal}, which is designed to conformalize multivariate score functions. In particular, 
  let $\mathbf{R}\colon \UC \to \UC$ be a reranking approach, designed to correct deviations from reference distribution $F_U$. 
  Then, the conformalization step may be applied to the adjusted scores $\|\mathbf{R}(U_i)\|$, yielding a calibrated radius $\rho_{1-\alpha}^{\mathrm{uni}}$ and the prediction set
  \[
    \hat{\CC}^{\unipullback}_{\alpha}(x) = \bigl\{y\colon \mathbf{R}\bigl(\widehat{Q}^{-1}_{Y \mid X}(y, x)\bigr) \in \widehat{\QC}(1 - \alpha)\bigr\},
  \]
  where \(\widehat{\QC}(1 - \alpha)=\{u\colon \|\mathbf{R}(u)\| \le \rho_{1-\alpha}^{\mathrm{uni}}\}\). See additional implementation details in \Cref{sec:impl:conformal}

 \begin{remark}For completeness, we also consider a complementary construction that leverages the OT quantile and rank maps to estimate the conditional density via the change of variables formula. Using the estimated density as a conformal score, this approach yields valid regions and can capture disconnected geometry when $F_{Y\mid X=x}$ is multimodal, e.g. Gaussian mixture. We provide additional details and a brief discussion in \Cref{sec:supp:conf_theory}.
 \end{remark}

\section{Related Work}
\label{sec:related_work}

\ecoparagraph{Multivariate Quantiles.}
  Scalar quantile regression estimates conditional quantiles of $Y \in \RR$ given $X \in \RR^p$, with linear-in-features models fitted via the check loss~\citep{koenker1978regression,koenker2005quantile}. Multivariate extensions are harder due to the absence of a total order. Early notions include spatial quantiles~\citep{chaudhuri1996geometric} and depth-based quantiles~\citep{hallin2021distribution}, but these lack transport-map properties. A measure-transportation perspective defines multivariate quantiles as OT maps from a reference distribution, yielding center-outward ranks and quantile regions with strong properties~\citep{chernozhukov2017monge,hallin2021distribution,hallin2024multivariate,del2025nonparametric}. The conditional vector quantile function (CVQF) of \citet{carlier2016vector} uses affine-in-$X$ models estimated by variational OT~\citep{carlier2017vector}, with extensions to nonlinear embeddings~\citep{rosenberg2023fast}, continuous VQR~\citep{tallini2023continuous}, and nonparametric ranks~\citep{del2025nonparametric}. Scalable solvers rely on entropic regularization~\citep{carlier2022vector}; but to the best of our knowledge have never been scaled with Neural OT as we propose here.

\ecoparagraph{Neural Optimal Transport.}  
	High-dimensional OT is challenging due to the nonlinear dual formulation. One approach employs entropic regularization, enabling Sinkhorn iterations and stochastic gradient solvers~\citep{cuturi2013sinkhorn,genevay2016stochastic,seguy2018large,carlier2022vector}, which scale well but introduce bias that may distort the geometry of quantile maps~\citep{rosenberg2023fast}. A second approach parameterizes convex potentials with input-convex neural networks (ICNNs; \citealp{amos2017input,makkuva2020optimal,amos2023amortizingconvexconjugatesoptimal}), ensuring monotonicity and invertibility of the learned map. Conditional potentials (and Monge maps) have been proposed in \cite{bunne2022supervised} but are learned in a supervised way (from examples of conditioning and target distributions) and never from a unique joint sampling using the framework of \citet{carlier2017vector} as proposed in our work.

\ecoparagraph{Multivariate Conformal Prediction.}
	Conformal prediction (CP) constructs distribution-free predictive sets with coverage guarantees. In the scalar case, conformalized quantile regression (CQR; \citealp{shafer2008tutorial, romano2019conformalized,angelopoulos2023conformal}) adjusts quantile estimates to achieve valid intervals. For multivariate responses, naive coordinatewise CP yields conservative rectangles; scalarized scores via norms or maxima produce balls or boxes, but remain restrictive. Structured approaches include deep generative embeddings~\citep{feldman2023calibrated} and copula calibrations~\citep{messoudi2021copula}. \citet{dheur2025unified} propose conformity scores based on generative models or aggregated $p$-values. 

  Very recently, the use of OT-based ranks and quantiles has been exploited in conformal prediction. In two concurrent works, \citet{thurin2025optimal} define conformity scores from discrete OT ranks, while \citet{klein2025multivariate} leverage the same construction albeit with entropy regularized discrete OT. By construction, these two approaches are not adaptive, i.e. the size of the conformal set does not depend on $X$. Nonetheless, \citet{thurin2025optimal} propose an adaptive variant based on conditional  with k nearest neighbors. Our direct learning of neural VQR does not depend on conditional density estimation and should perform better in high dimensionality settings.

\section{Numerical Experiments}
\label{sec:numerical_experiments}

\subsection{Neural Optimal Transport}
  To evaluate the generative performance of our models, we conduct extensive experiments. Whenever a ground-truth operator is required, we parametrize the datasets using a convex potential function, see \Cref{sec:supp:optimal_transport_experiments_datasets} for details. EC-NQR, C-NQR$_U$, C-NQR$_Y$, AC-NQR$_U$, AC-NQR$_Y$ are the methods described in \Cref{sec:conformal_ot_neural_maps}. We measure the generative performance against FN-VQR \citep{rosenberg2023fast}, VQR \citep{carlier2017vector} and CPF \citep{huang2021convex}.
  
\label{sec:optimal_transport_experiments}

\ecoparagraph{Metrics.}
  We employ three categories of metrics: (i) Wasserstein-2 (W2) and Sliced Wasserstein-2 (S-W2) distances; (ii) Kernel Density Estimate $\ell_{1}$ distance (KDE-L1) and Kernel Density Estimate Kullback–Leibler divergence (KDE-KL); and (iii) Percentage of Unexplained Variance (L2-UV) \cite{korotin2021neural}. Metrics in (i) and (ii) quantify the fidelity of the learned distribution to the target density, while (iii) assesses the extent to which the ground-truth quantile is recovered. Additional implementation details are provided in \Cref{sec:supp:optimal_transport_experiments_datasets}.

  \begin{figure}[t!]
    \centering
    \includegraphics[width=0.9\linewidth]{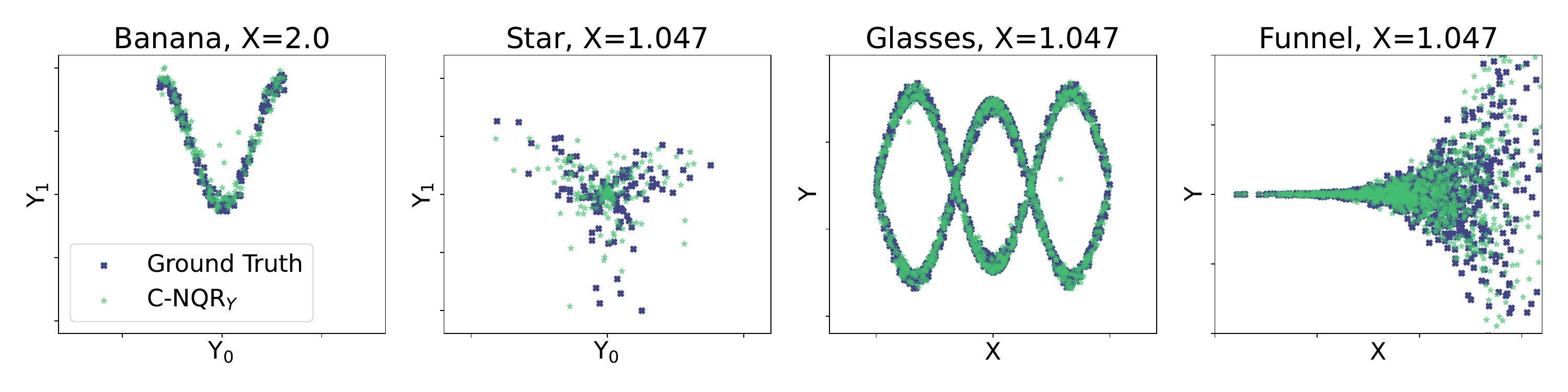}
    \vspace{-20pt}
    \caption{Example of points sampled from reference distribution of all the datasets we study and points sampled from approximation constructed by C-NQR$_U$ method.}
    \label{fig:sampled_points}
  \end{figure}

\ecoparagraph{Datasets.}
  We evaluate on three synthetic datasets originally introduced in the discrete setting of conditional quantile regression~\citep{rosenberg2023fast}:  
  \textit{Banana}, a parabola-shaped distribution whose curvature varies with a latent random variable;  
  \textit{Star}, a three-pointed star whose orientation is governed by a latent variable;  
  and \textit{Glasses}, a bimodal distribution with sinusoidally shifting modes.  
  We denote convex-potential counterparts as \textit{Convex Banana}, \textit{Convex Star}, and \textit{Convex Glasses}.

  Lastly, we consider Neal's Funnel~\citep{neal2003slice} \Cref{fig:sampled_points}. We extend this benchmark to higher dimensions by sampling $n$ independent samples from the distributions.
  
  \begin{table}[t!]
    \centering
    \resizebox{\textwidth}{!}{%
    \begin{tabular}{c|cccccccc}
      Dataset & EC-NQR & C-NQR$_U$ & C-NQR$_Y$ & AC-NQR$_U$ & AC-NQR$_Y$ & CPF & FN-VQR & VQR \\
      \midrule
      \textit{Star} & $0.197$ & $\underline{0.184}$ & $0.184$ & $\underline{\mathbf{0.182}}$ & $0.197$ & $0.247$ & $0.271$ & $0.270$ \\
      \textit{Glasses} & $\underline{\mathbf{0.748}}$ & $0.785$ & $0.812$ & $\underline{0.771}$ & $0.810$ & $1.687$ & $2.017$ & $1.964$ \\
      \textit{Banana} & $0.111$ & $0.072$ & $0.073$ & $0.073$ & $\underline{0.072}$ & $\underline{\mathbf{0.069}}$ & $0.398$ & $0.389$ \\
      \textit{Convex Star} & $0.200$ & $\underline{\mathbf{0.182}}$ & $0.184$ & $\underline{0.182}$ & $0.191$ & $0.191$ & $0.262$ & $0.261$ \\
      \textit{Convex Glasses} & $\underline{\mathbf{0.650}}$ & $\underline{0.656}$ & $0.668$ & $0.657$ & $0.689$ & $0.760$ & $1.954$ & $1.961$ \\
      \textit{Convex Banana} & $0.103$ & $0.101$ & $0.071$ & $\underline{0.070}$ & $0.070$ & $\underline{\mathbf{0.069}}$ & $0.397$ & $0.392$ \\
      \midrule
      Training time & 10.99 sec. & 15.08 sec. & 15.09 sec. & 8.89 sec. & 12.63 sec. & - & - & - \\
      Inference time & 1.71 sec. &  1.21 sec. & 1.76 sec. & 1.12 sec. & 1.34 sec. & - & - & - \\
      \bottomrule
    \end{tabular}%
    }
    \vspace{-10pt}
    \caption{S-W2 between ground truth and empirical distributions. We provide training time per epoch that is averaged over all the datasets and average inference time for computing c-transform inverse of 8192 elements.}
  \label{table:wasserstein_synthetic}
  \end{table}

\ecoparagraph{Results.}
 \Cref{table:wasserstein_synthetic} reports the median S-W2 metric across datasets. We find distance between $Y \mid X$ and $U \mid X$ to be most indicative of overall performance. We additionally report training and inference times: training time is reported as the median per-epoch duration across datasets, while inference time is the median cost of computing the $c$-transform on a batch of size $8192$ for a fully trained model. Further evaluation metrics with error bars are provided in the \Cref{sec:supp:optimal_transport_experiments_metrics}. We denote by C-NQR${_U}$ and AC-NQR${_U}$ the models estimating $\varphi(u,x)$, and by C-NQR${_Y}$ and AC-NQR${_Y}$ the models estimating $\psi(y,x)$; see equation~\eqref{eq:dual_vqr}.

  To evaluate scalability, \Cref{fig:funnel} reports the S-W2 metric on Neal's Funnel as the dimension of the target distribution increases from $2$ to $16$.  

  Finally we evaluate the ability of our method to recover the underlying generative structure. We report L2-UV metric in \Cref{table:unexplained_variance} evaluated on \textit{Convex Banana}, \textit{Convex Star} and \textit{Convex Glasses}.
  
  \begin{table}[t!]
    \centering
    \resizebox{\textwidth}{!}{%
    \begin{tabular}{cccccccc}
      Function & Dataset & \multicolumn{1}{c}{EC-NQR} & \multicolumn{1}{c}{C-NQR$_U$} & \multicolumn{1}{c}{C-NQR$_Y$} & \multicolumn{1}{c}{AC-NQR$_U$} & \multicolumn{1}{c}{AC-NQR$_Y$} & \multicolumn{1}{c}{CPF} \\
      \midrule
      \multirow{3}{*}{$Q^{-1}_{Y\mid X}$}
      & \textit{Convex Star} & $1.331$ & $\underline{\mathbf{0.211}}$ & $0.286$ & $\underline{0.264}$ & $0.425$ & $0.447$ \\
      & \textit{Convex Glasses} & $0.348$ & $0.332$ & $\underline{\mathbf{0.068}}$ & $0.203$ & $\underline{0.109}$ & $2.268$ \\
      & \textit{Convex Banana} & $3.942$ & $3.784$ & $0.212$ & $\underline{\mathbf{0.106}}$ & $\underline{0.206}$ & $9.479$ \\
      \midrule
      \multirow{3}{*}{$Q_{U\mid X}$} & Convex Star & $2.746$ & $0.360$ & $\underline{0.351}$ & $0.393$ & $0.525$ & $\underline{\mathbf{0.267}}$ \\
      & \textit{Convex Glasses} & $\underline{0.678}$ & $\underline{\mathbf{0.535}}$ & $0.732$ & $0.985$ & $1.096$ & $1.726$ \\
      & \textit{Convex Banana} & $9.400$ & $7.665$ & $0.660$ & $\underline{\mathbf{0.545}}$ & $\underline{0.569}$ & $16.537$ \\ 
      \bottomrule
    \end{tabular}%
    }
    \vspace{-10pt}
    \caption{L2-UV of the true quantile function measured on generative processes parameterized by convex potential networks.}
  \label{table:unexplained_variance}
  \end{table}

  \begin{figure}[t!]
    \centering
    \includegraphics[width=\linewidth]{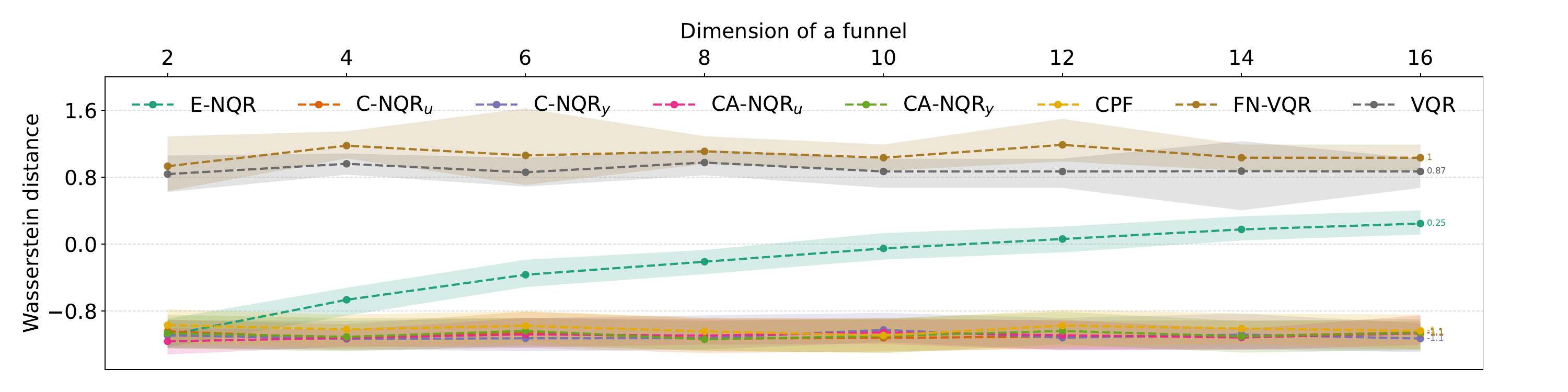}
    \vspace{-25pt}
    \caption{S-W2 on  Neal's funnel distribution. We scale the dimension of a funnel from 2 to 16.}
  \label{fig:funnel}
  \end{figure}


\subsection{Conformal Prediction Experiments}
\label{sec:conformal_experiments}
  We further evaluate conformal prediction by constructing prediction sets on real-world datasets using the methods described in \Cref{sec:conformal_ot_neural_maps}. Extended results are presented in Appendix~\ref{sec:supp:experiment_details}.

\ecoparagraph{Methods.}
  We use AC-NQR$_U$ as the base model to implement our two conformal methods: PB($\hat{\CC}^\pullback$) and RPB ($\hat{\CC}^\unipullback$). In addition to fitting our vector quantile regression model directly on $y$, we also fit both methods on signed residuals $s=y-\hat{f}(x)$, where $\hat{f}$ is a Random Forest regressor fit on $25\%$ of the training data (PBS and RPBS in the plots). We consider OT-CP and OT-CP+~\citep{thurin2025optimal}, as well as the local Ellipsoid method~\citep{messoudi2022ellipsoidal} for comparison.

\ecoparagraph{Metrics.}
  We evaluate performance using three metrics: (i) marginal coverage, (ii) worst-slab coverage~\citep{cauchois2021knowing}, and (iii) average prediction set volume.

\ecoparagraph{Datasets.}
  We evaluate on standard multi-target regression benchmarks used in previous work on uncertainty estimation~\citep{plassierrectifying, dheur2025unified}: \texttt{scm20d}, \texttt{sgemm}, \texttt{blog}, and \texttt{bio}. For the single-target datasets \texttt{blog} and \texttt{bio}, we follow~\cite{feldman2023calibrated} and add one of the features as a second output. The resulting response dimensions are 16, 4, 2 and 2, respectively. We use preprocessing procedure of~\citep{grinsztajn2022tree}.
  
  \begin{figure}
    \centering    
    \includegraphics[width=\textwidth]{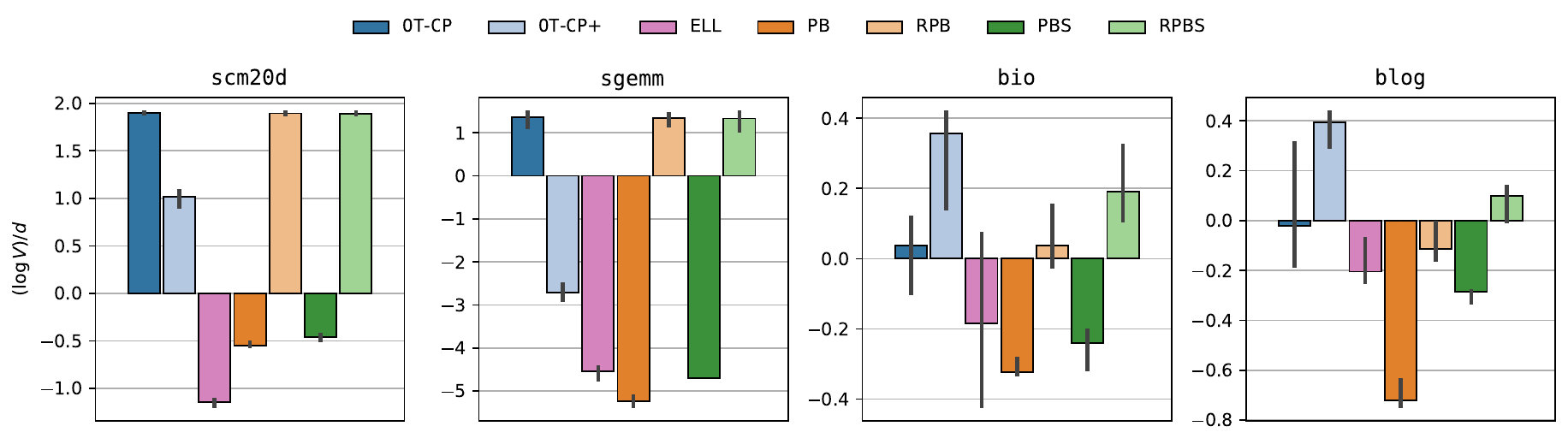}
    \vspace{-25pt}
    \caption{Log-volume of the prediction sets, normalized by $d_y$, of the resulting prediction sets for different methods. Results averaged over 10 independent data splits. Nominal miscoverage level $\alpha=0.1$}
    \label{fig:conf_volumes}
  \end{figure}

  \begin{figure}
    \centering    \includegraphics[width=\textwidth]{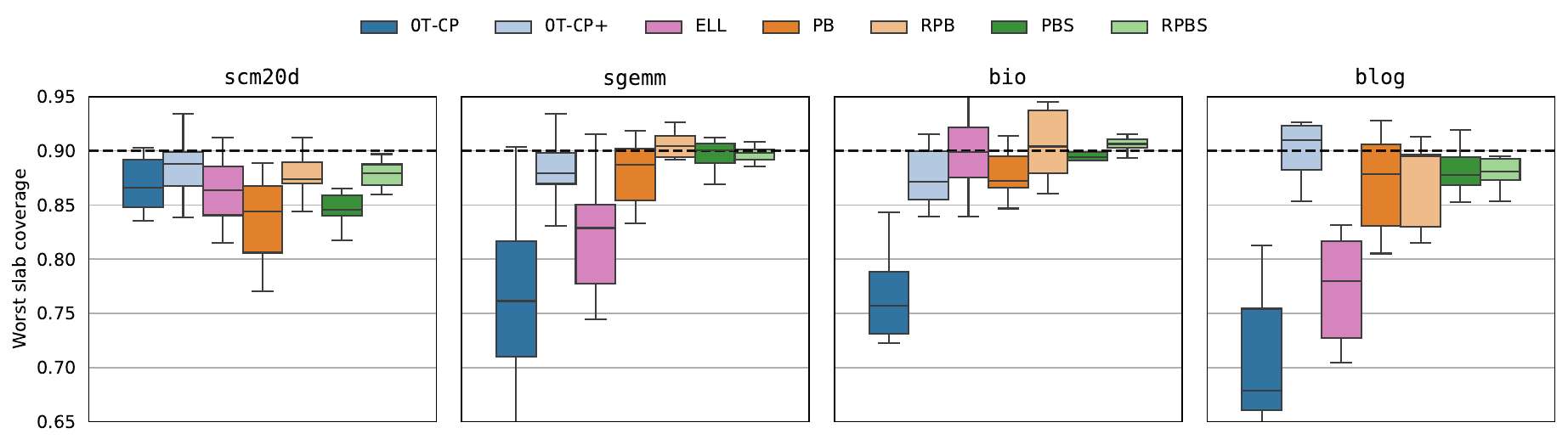}
    \vspace{-25pt}
    \caption{Worst-slab coverage for different methods. Results averaged over 10 independent data splits. Nominal miscoverage level $\alpha=0.1$}
    \label{fig:conf_volumes}
  \end{figure}
\ecoparagraph{Discussion.} PB and PBS provide competitive conditional coverage and smallest volume at the same time on three out of four datasets. The re-ranking step of RPB and RPBS allows to achieve a slightly sharper conditional coverage, but the increase in prediction sets volume make it a questionable trade-off. Overall, it shows that for our conditional quantile regression models the split conformal calibration is enough. Our methods provide a scalable training enable building competitive conformal predictors.




\section{Conclusion}
  We introduced a framework for multivariate conformal prediction based on convex potentials and optimal transport. Our approach leverages neural quantile regression with input convex neural network parameterization to construct valid and efficient prediction sets. Through experiments on synthetic benchmarks and real-world multi-target regression datasets, we demonstrated strong performance in terms of coverage and set size, while maintaining scalability in higher dimensions. Comparisons with existing baselines further highlight the robustness and flexibility of our method. Future work includes extending the framework to broader classes of generative models and exploring tighter efficiency guarantees in high-dimensional regimes.

\clearpage
\newpage

\section*{Usage of Large Language Models (LLMs)}
LLMs were used as a general-purpose assistive tool during the preparation of this paper. Their usage fell into two categories: (i) for writing assistance, they helped improve clarity and readability of certain passages through language refinement and (ii) for coding assistance, where they provided support with code completion and debugging. LLMs were not used for research ideation, experimental design, theoretical development, or analysis of results. All substantive contributions, including the conception of ideas, methodology, and experiments, were made by the authors.

\section*{Reproducibility Statement}
  We provide the full code to reproduce our experiments as supplementary material and will release it publicly upon acceptance.
  All experiments were conducted on publicly available datasets or datasets we created ourselves, which will be released alongside the code.
  We ran experiments with multiple seeds, if applicable, and report summary statistics.

\bibliographystyle{iclr2026_conference}
\bibliography{references.bib}

\clearpage
\newpage

\appendix

\section{Extended State of the Art}
\label{sec:SOTA}

\paragraph{From scalar to vector quantiles.} 
  Classical quantile regression (QR) estimates conditional quantiles of a scalar response $Y \in \RR$ given features $X \in \RR^p$, providing a flexible alternative to least squares for modeling heterogeneous effects~\citep{koenker1978regression,koenker2005quantile}. For a quantile level $u \in (0,1)$ and feature map $\varphi(x)$, a standard linear QR model assumes $Q_{Y\mid X}(u \mid x)=\beta(u)^\top \varphi(x)$, with $\beta(u)$ obtained by minimizing the check-loss. While univariate QR theory is well-developed, extending these notions to a multivariate response $Y \in \RR^d$ is challenging due to the lack of a natural total order on $\RR^d$. Many generalizations have been proposed, including \emph{directional} or \emph{projection} quantiles (reducing to scalar quantiles along particular directions) and \emph{geometric} or \emph{spatial} quantiles~\citep[e.g.][]{chaudhuri1996geometric}, as well as definitions based on statistical depth (e.g. Tukey's halfspace depth) that yield central regions viewed as multivariate ``quantiles.'' However, these early notions only partially extend scalar quantile properties and generally do not yield a unique quantile \emph{mapping} for $Y$. A recent breakthrough comes from the \emph{measure transportation} perspective, which defines multivariate quantiles as the optimal transport map pushing a reference distribution (usually the spherical uniform, or uniform on the unit hypercube) onto the distribution of $Y$. This approach—rooted in Brenier's theorem on monotone optimal transport maps~\citep{brenier1991polar}—yields well-behaved center-outward distribution and quantile functions that assign each point in $\RR^d$ a multivariate rank and sign with distribution-free properties. The resulting quantile regions are nested, have correct probability contents, and enjoy equivariance properties generalizing the one-dimensional case. These concepts, introduced by~\citet{chernozhukov2017monge} and further developed by~\citet{hallin2021distribution}, provide a rigorous multivariate analog of the quantile function; see~\citep{hallin2017multiple} for a survey of earlier definitions. Recent work continues to refine this framework: \citet{hallin2024multivariate} compare geometric vs. transport-based contours, and nonparametric multiple-output quantile regression methods based on center-outward ranks have been proposed~\citep{del2025nonparametric}.  

\paragraph{Vector quantile regression (VQR).}  
  Building on optimal transport ideas, \citet{carlier2016vector} introduced the \emph{conditional vector quantile function} (CVQF) $Q_{Y\mid X}(u,x)$ for $Y \in \RR^d$. This is defined as a (a.e.) \emph{monotone} map in $u$ — specifically, the gradient of a convex function in the $u$ argument — such that for each fixed $x$, $Q_{Y\mid X}(\cdot, x)$ pushes the uniform distribution on $[0,1]^d$ forward to the conditional distribution $Y\mid X=x$. In analogy to the scalar case, one can represent $Y$ as $Y = Q_{Y\mid X}(U, X)$ with $U \sim \text{Unif}([0,1]^d)$ independent of $X$. This generalizes the scalar quantile relationship $Y = Q_{Y\mid X}(U,X)$ for $U \sim \text{Unif}(0,1)$, providing a powerful characterization of the conditional law of $Y$ by a deterministic map on the unit hypercube. In practice, VQR imposes a parametric form on the CVQF; for example, the original proposal assumes an affine structure $Q_{Y\mid X}(u,x) = \alpha(u) + B(u)^\top x$ (with $\alpha(u) \in \RR^d$ and $B(u) \in \RR^{d\times p}$) and estimates these functions by solving a large-scale optimal transport problem under empirical data constraints. The solution can be found via a convex dual formulation analogous to Koenker's linear program, ensuring the fitted $Q_{Y|X}$ is monotone in $u$ (i.e. cyclically monotonic)~\citep{carlier2016vector,carlier2017vector}. This yielded the first notion of ``quantile regression for vectors,'' including strong theoretical guarantees on consistency and uniqueness under appropriate conditions. Since then, a number of extensions have been proposed: \citet{rosenberg2023fast} introduce a fast nonlinear VQR model (e.g. using kernel or neural network features) while preserving monotonicity, \citet{tallini2023continuous} develop a continuous VQR formulation that treats $u$ in a continuum (rather than on a finite grid of quantile levels), and fully nonparametric approaches based on center-outward quantile functions have appeared~\citep{del2025nonparametric}. Each of these methods seeks to balance flexibility and computational tractability while maintaining the defining property that $u \mapsto Q_{Y|X}(u,x)$ is a gradient map (hence invertible and order-preserving in the multivariate sense).  

\paragraph{Computation.}  
  Implementing VQR at scale poses significant challenges. The initial algorithms of~\citet{carlier2016vector} and~\citet{carlier2017vector} relied on discretizing the unit hypercube $[0,1]^d$ (for a set of representative $u$ values) and solving a large linear program, which becomes computationally expensive as $d$ or the number of quantile levels grows. Two recent strategies have substantially improved the scalability of VQR. First, \citet{carlier2022vector} propose an \emph{entropic regularization} of the OT problem, which smooths the objective and leads to a differentiable dual formulation. By applying Sinkhorn-type iterations or gradient-based optimization on the regularized dual, one can efficiently approximate the CVQF without solving a huge LP, even for continuous $u$ spaces. This regularized VQR approach yields an accuracy–computational cost trade-off controlled by the entropy penalty, and it has demonstrated orders-of-magnitude speedups on moderate-dimensional problems. 

  The second approach uses deep learning to represent the convex potential of the CVQF: \citet{makkuva2020optimal} propose to parameterize $Q_{Y|X}(u,x)$ as $\nabla_u \psi(u,x)$ where $\psi$ is an input-convex neural network in $u$. By training $\psi$ on data (using a suitable loss derived from the OT characterization), one obtains a VQR model that can handle high-dimensional $X$ and $Y$ and large sample sizes. This method, part of a broader trend of using neural networks for OT map estimation, sidesteps explicit discretization by leveraging automatic differentiation to enforce convexity in $u$. Both the entropic-OT and ICNN-based approaches have made it feasible to learn multivariate quantile functions on modern datasets, a task once thought impractical. For additional background on scalable optimal transport techniques that underlie these advances, see~\citep{peyre2019computational}.  

\paragraph{Conformal prediction.}  
  Conformal prediction (CP) provides distribution-free predictive uncertainty sets with finite-sample coverage guarantees. In the scalar $Y$ case, it is common to combine quantile regression with conformal calibration. For example, conformalized quantile regression (CQR) uses holdout data to adjust the initially estimated interval $[\hat{Q}_{Y|X}(\alpha/2 \mid x), \hat{Q}_{Y|X}(1-\alpha/2 \mid x)]$ so that it achieves the target coverage $1-\alpha$ marginally. CQR and related methods yield prediction intervals that are adaptive (varying with $x$) while retaining rigorous coverage guarantees~\citep{romano2019conformalized,angelopoulos2023conformal}. However, extending CP to multivariate outputs has proven more complex. Naively applying conformal methods to each component of $Y=(Y_1,\dots,Y_d)$ and taking a Cartesian product of marginal intervals yields a rectangular prediction region that is valid but often overly conservative (covering significantly more than $1-\alpha$ of the probability). More refined strategies have been proposed to account for dependence between coordinates. One line of work defines a scalar nonconformity score from the multi-output residual, for instance using a norm $|Y_{\text{pred}}-Y_{\text{true}}|$ or the maximum deviation across coordinates; this yields prediction balls or boxes aligned to the chosen norm. While simple, such choices typically lead to symmetric or axis-aligned regions that may be suboptimal in shape and volume. For example, the PCP method of~\citet{wang2023probabilistic} leverages an implicit generative model to draw random samples from $Y \mid X=x$ and constructs the prediction set as a union of Euclidean balls (of a fixed radius) centered at those samples. This approach guarantees marginal coverage and can improve sharpness over naive intervals, but using a global radius for all $x$ can lead to over-coverage in low-variability regions and under-coverage in high-variability regions. Alternatively, some works shape the prediction set as an ellipsoid by incorporating covariance structure: e.g. using a single global covariance estimate~\citep{johnstone2021conformal} or a local covariance around $x$~\citep{messoudi2022ellipsoidal} to define a Mahalanobis-distance conformity score. Such ellipsoidal regions capture linear correlations in $Y$ and are typically smaller than axis-aligned boxes, but they still assume an (approximately) elliptical and unimodal error distribution, which may be inappropriate for complex multimodal targets.

  Another class of methods seeks to learn a joint representation or dependency model for $Y$ before applying conformal. For example, \citet{feldman2023calibrated} train a deep generative model to embed $Y$ into a lower-dimensional (ideally unimodal) latent space and perform conformal quantile regression in that space, producing flexible regions when mapped back to $\RR^d$. Similarly, \citet{messoudi2021copula} and subsequent works leverage copula transformations: they calibrate marginal predictive intervals at miscoverage levels chosen to optimize the volume of the resulting joint region, effectively shaping the prediction set according to the dependence structure of $Y$. In particular, \citet{zhang2023improved} extend copula-based conformal prediction by allowing different significance levels for each output dimension and directly optimizing the hyperrectangle volume under the coverage constraint. \citet{sun2022copula}  provide a theoretical analysis of such copula-shaped prediction sets, proving that the empirical copula approach achieves finite-sample validity under i.i.d. assumptions. These methods produce tighter joint regions than the naive Cartesian product by allocating miscoverage intelligently across coordinates, though they often rely on either simple parametric copulas or numerical search to balance the marginal intervals.

  Very recently, \citet{dheur2025unified} conducted a comprehensive study of multi-output conformal methods, proposing in particular two new families of conformity scores. One uses a generative model (e.g. an invertible normalizing flow) to transform $Y$ into a space where conventional CP can be applied coordinate-wise, and the other defines a computationally efficient scalar score by combining coordinate-wise conformal $p$-values (essentially summing their logarithms). Both approaches attain finite-sample marginal coverage and offer improvements in conditional coverage. Notably, a conceptually similar idea was introduced concurrently by~\citet{fang2025contra}, who also leverage normalizing flows to define nonconformity in the latent space. Their method (CONTRA) maps high-density regions in the latent space to complex but high-coverage regions in output space, yielding non-axis-aligned prediction sets that outperform standard hyperrectangles or ellipsoids. Despite these advances, none of the above techniques exploits the full geometric structure of multivariate quantiles or ranks. This gap has been filled by two concurrent works that integrate the measure-transport perspective into conformal inference.

  \citet{thurin2025optimal} introduce OT-CP, which uses the center-outward rank function of~\citet{hallin2021distribution} to define multivariate order statistics. In essence, they compute the “rank” of a test point $y$ among past observations in $\RR^d$ via the empirical center-outward distribution (obtained by optimal transport), and use the corresponding multivariate quantile level as the nonconformity score. This yields a prediction region for a new $X=x$ by including all $y$ whose center-outward rank is above a certain quantile (determined by the calibration set)—intuitively, the set of points that lie among the $(1-\alpha)$ fraction most central (least outlying) under the conditional distribution of $Y\mid X=x$. Independently, \citet{klein2025multivariate} develop a related approach that also relies on optimal transport to order multivariate outputs. They formalize the notion of distribution-free multivariate quantile regions and provide finite-sample coverage guarantees for both exact and approximate transport maps. These OT-based conformal methods leverage the geometry of Brenier maps (i.e. conditional Monge–Ampère transports) to construct flexible, data-dependent prediction sets in $\RR^d$ that adapt to the local distribution of $Y \mid X=x$. By exploiting the vector-quantile structure, they can achieve tighter coverage with complex (even non-convex) regions while still guaranteeing the rigorous coverage properties that make conformal prediction attractive. However, the use of optimal transport maps can be computationally expensive in high dimensions, and in practice one might need to trade off some statistical efficiency for tractability when estimating the transport.

  Finally, an alternative direction is to explicitly optimize prediction set volume subject to coverage, rather than relying on a fixed conformity score. \citet{braun2025minimum} propose an optimization-driven framework that learns minimum-volume covering sets for multivariate regression. In their approach, the predictive model is trained jointly with a parametric prediction set (for example, an adaptive norm-ball whose radius may vary with $x$) to minimize the volume of the set while enforcing coverage on the training data via a surrogate loss. This procedure effectively learns the shape of the prediction region that best captures a specified proportion of the data. By conformalizing the learned region (i.e. slightly expanding it to guarantee $1-\alpha$ coverage on a holdout set), the method yields valid prediction sets that are much tighter than those from standard split-conformal methods. Such approaches highlight an exciting trend of combining machine learning and conformal inference: rather than treating the prediction algorithm as a black box, one can optimize the model and its uncertainty quantification in tandem to achieve improved efficiency (smaller, more informative prediction sets) without sacrificing the finite-sample guarantees of CP.

\section{Entropy-Regularized Neural VQR}
\label{supp:entropy_regularized_neural_optimal_transport}

  Let $\XC,\YC,\UC$ be Polish spaces with Borel $\sigma$–algebras, and let $m$ be the marginal law of $X$, $\nu(\rmd x,\rmd y)=m(\rmd x)\,\nu_z(\rmd y)$ the joint law of $(X,Y)$, and $\mu(\rmd x,\rmd u)=m(\rmd x)\,\bar{\mu}(\rmd u)$ the joint law of $(X,U)$ (where $\bar{\mu}$ is the marginal distribution of $U$). For $\varepsilon>0$, the entropic-regularized \emph{conditional} OT problem reads~\citep{carlier2022vector}
  \begin{equation}
  \label{eq:primal-entropic}
    \min_{\gamma \in \MC_+(\XC \times \YC \times \UC)}
    \Big\{-\int u^\top y \rmd\gamma + \varepsilon\,\mathrm{KL}\big(\gamma ~\Vert~ \bar{\mu} \otimes \nu\big)\Big\}
    \quad\text{s.t.}\quad
    \Pi_{X,Y}\#\gamma=\nu, \Pi_{X,U}\#\gamma=\mu.
  \end{equation}
  This is a strictly convex problem with linear marginal constraints; $\mathrm{KL}$ denotes the Kullback–Leibler divergence. \eqref{eq:primal-entropic} specializes the standard entropic OT to the conditional setting by constraining the two $(X,\cdot)$ marginals of $\gamma$.

\ecoparagraph{Dual formulation via Fenchel–Rockafellar.}
  We introduce the dual potentials $\psi\colon \XC \times \YC \to \RR$ and $\varphi\colon \XC \times \UC \to \RR$. A direct application of Fenchel–Rockafellar duality yields the (unconstrained) dual
  \begin{multline}
  \label{eq:dual-entropic}
    \inf_{\psi,\varphi}
    \underbrace{\int \psi(y, x)\,\nu(\rmd x,\rmd y)}_{\text{term for }\Pi_{X,Y}}
    +
    \underbrace{\int \varphi(u, x)\,\mu(\rmd x,\rmd u)}_{\text{term for }\Pi_{X,U}}
    \\ 
    +
    \varepsilon \int \int
    \exp\Big(\tfrac{u^\top y-\varphi(u, x)-\psi(y, x)}{\varepsilon}\Big)\,
    \nu(\rmd x,\rmd y) \bar{\mu}(\rmd u),
  \end{multline}
  with zero duality gap and attainment under mild assumptions (tightness and finite entropy)
  The inequality constraint of the unregularized dual is absorbed by the exponential term  barrier in~\eqref{eq:dual-entropic}. This could be solved using purely stochastic optimization with NN parameterization of the two dual potentials $\psi(y, x)$ and $\varphi(u, x)$ similarly to what was proposed in~\citep{genevay2016stochastic, seguy2018large}. But from a practical perspective the exponential in the loss is particularly hard to optimize with numerical stability problems. This is why \citet{genevay2016stochastic} proposed to remove one of the potentials using the smooth version of the $c$–conjugacy detailed below.

\ecoparagraph{KKT/first-order conditions: \emph{soft} $c$–conjugacy.}
  Assuming $\nu$ and $\bar{\mu}$ admit densities w.r.t. the Lebesgue measure and differentiating the dual objective in~\eqref{eq:dual-entropic} w.r.t.\ $\psi$ and $\varphi$ gives the optimality (stationarity) conditions
  \begin{align}
  \label{eq:soft-psi-of-phi}
    \psi_\varepsilon(y, x)
    &= \varepsilon \log \int
    \exp\Big(\tfrac{u^\top y-\varphi_\varepsilon(u, x)}{\varepsilon}\Big) \bar{\mu}(\rmd u),
    \\
    \label{eq:soft-phi-of-psi}
    \varphi_\varepsilon(u, x)
    &= \varepsilon \log \int
    \exp\Big(\tfrac{u^\top y-\psi_\varepsilon(y, x)}{\varepsilon}\Big) \nu_x(\rmd y),
  \end{align}
  which are the entropic (``soft'') $c$–transforms, i.e., log-partition functions of exponential families induced by the bilinear cost $c(u,y)=-u^\top y$.
  At $\varepsilon\downarrow 0$, the identities~\eqref{eq:soft-psi-of-phi}–\eqref{eq:soft-phi-of-psi} $\Gamma$–converge to the hard Fenchel conjugacy $\psi=\varphi^\star$, recovering the unregularized dual feasibility $\varphi(u, x)+\psi(y, x)\ge u^\top y$ with equality on the support of the optimal plan.

\ecoparagraph{Reduction to a single potential (semi-dual).}
  Eliminating $\psi$ in~\eqref{eq:dual-entropic} via~\eqref{eq:soft-psi-of-phi} yields an equivalent unconstrained problem in $\varphi$:
  \begin{equation}
  \label{eq:phi-only-objective}
    \UC_\varepsilon(\varphi)
    =
    \EE_{(X,U)\sim \mu}\big[\varphi(U, X)\big]
    +
    \EE_{(X,Y)\sim \nu}
    \bigg[\varepsilon \log \int
    \exp\Big(\tfrac{u^\top Y-\varphi(u, X)}{\varepsilon}\Big) \bar{\mu}(\rmd u)\bigg],
  \end{equation}
  which is precisely the regularized analogue of the conjugate-based loss in the unregularized case (log-sum-exp replaces the $\sup$). This problem is very interesting from an optimization perspective because now a unique dual potential needs to be optimized and the log-sum-exp can be implemented in  a much more stable way than the exponential in the dual \eqref{eq:dual-entropic}. But then the inner expectation in the right part of \eqref{eq:phi-only-objective} cannot be computed exactly, which we discuss next.

\ecoparagraph{Gibbs conditionals and gradients.}
  Define the Gibbs conditional density (a.k.a.\ Schrödinger bridge ``posterior'')
  \[
    \pi_\varphi(\rmd u\mid y, x)
    \propto
    \exp\Big(\tfrac{u^\top y-\varphi(u, x)}{\varepsilon}\Big) \bar{\mu}(\rmd u).
  \]
  As in the not regularized case, we parameterize the potential $\varphi_\epsilon$ with a neural network. We denote by $\theta$ the parameters (weights) of this network. Using the log-partition derivative identity, we get that $\nabla_\theta \, \UC_\varepsilon(\varphi_\theta)$ admits the ``positive minus negative phase'' form
  \begin{equation}
  \label{eq:gradient}
    \nabla_\theta \, \UC_\varepsilon(\varphi_\theta)
    =
    \EE_{(X,U)\sim \mu}\big[\nabla_\theta \varphi_\theta(X,U)\big]
    -
    \EE_{(X,Y)\sim \nu}\,\EE_{U\sim \pi_{\varphi_\theta}(\cdot\mid Y,X)}
    \big[\nabla_\theta \, \varphi_\theta(X,U)\big],
  \end{equation}
  obtained by differentiating the log-partition in~\eqref{eq:phi-only-objective}. In practice, the inner expectation is estimated by Monte Carlo with $U$ drawn either from $\pi_{\varphi_\theta}(\cdot\mid Y,X)$ or via importance sampling from $\bar{\mu}$ with the usual exponential weights.

\ecoparagraph{Quantile and rank maps under entropic regularization.}
  If $u\mapsto \varphi_\varepsilon(u, x)$ is (strongly) convex and smooth, the regularized analogues of the conditional vector quantile and rank are
  \begin{align}
  \label{eq:q-map-entropy}
    Q^{(\varepsilon)}_{Y\mid X}(u, x) &:= \nabla_u \varphi_\varepsilon(u, x),\\
    \label{eq:rank-entropy}
    \big(Q^{(\varepsilon)}_{Y\mid X}\big)^{-1}(y, x)
    &:= \nabla_y \psi_\varepsilon(y, x)
    = \EE_{U\sim \pi_{\varphi_\varepsilon}(\cdot\mid y, x)}[U],
  \end{align}
  where the last identity follows by differentiating~\eqref{eq:soft-psi-of-phi}. Equations~\eqref{eq:q-map-entropy}–\eqref{eq:rank-entropy} are the entropic counterparts of the unregularized identities and reduce to them as $\varepsilon\downarrow 0$.

\ecoparagraph{Limit $\varepsilon\downarrow 0$.}
  As $\varepsilon\to 0$, $\varepsilon\log \int \exp((\cdot)/\varepsilon)\to \sup(\cdot)$, so
  \[
    \UC_\varepsilon(\varphi) \xrightarrow[\varepsilon\downarrow 0]{}
    \EE_{\mu}[\varphi(X,U)] + \EE_{\nu}\big[\varphi^\star(X,Y)\big],
  \]
  recovering the unregularized loss with the hard Fenchel conjugate and the transition from the constrained dual (inequality) to the unconstrained conjugate form. In the same limit, $\pi_\varphi(\cdot\mid y, x)$ concentrates on the (possibly set-valued) argmax of $u\mapsto u^\top y-\varphi(u, x)$, and~\eqref{eq:q-map-entropy}–\eqref{eq:rank-entropy} converge to the OT maps of the unregularized problem.

\section{Conditional Convex Potential Flows}
\label{supp:conditional_convex_potential_flows}

\paragraph{Conditional (partially convex) construction.}
  Given covariates $x \in \XC$, we model the conditional transport by a \emph{partially} input–convex potential
  \[
    \varphi\colon \RR^d\times\XC \to \RR, \qquad u\mapsto \varphi(u;x)\ \text{convex (strongly convex) for each fixed }x,
  \]
  and define the \emph{conditional convex potential flow} (a.k.a. \emph{partially convex potential flow})
  \[
    Q_{Y\mid X}(u,x) := \nabla_{u}\varphi(u;x),\qquad U \sim f_U\ \leadsto\ Y \mid X=x\ \text{via}\ Y=Q_{Y\mid X}(U,x).
  \]
  Under absolute continuity of $f_{Y\mid X}(\cdot\mid x)$ (see \Cref{assum:absolute_continuity_cond_distr}), the conditional \emph{rank} map (inverse quantile) exists and is the gradient of the conjugate:
  \[
    Q_{Y\mid X}^{-1}(y,x) = \nabla_y \varphi^\star(y;x),
  \]
  and the two maps are inverses a.e.\ (in $u$ and $y$) for each $x$. For any $(y,x)$ such that the inverse is well defined.
  \begin{equation}\label{eq:cond-cov}
    f_{Y\mid X}(y\mid x) =  f_U\big(Q_{Y\mid X}^{-1}(y,x)\big) \det\Big[\nabla_y Q_{Y\mid X}^{-1}(y,x)\Big].
  \end{equation}
  Equivalently, writing $y=\nabla_u\varphi(u;x)$ with $u=Q_{Y\mid X}^{-1}(y,x)$,
  \[
    \log f_{Y\mid X}(y\mid x)\;=\;\log f_U(u)\;-\;\log\det\!\big[\nabla_{uu}^2\varphi(u;x)\big].
  \]
  Thus maximum likelihood amounts to estimating $\varphi$ so as to match the pullback $Q_{Y\mid X}^{-1}(Y,X)$ to the prior $f_U$, while penalizing the local volume change through the (log) Hessian determinant. In practice, the log-determinant and its gradients can be computed with Hessian–vector products, using stochastic Lanczos/trace estimators and conjugate-gradient solves, yielding unbiased $O(1)$–memory estimators that scale to high dimension.

\paragraph{Inversion and sampling.}
  For any $(y,x)$, inversion is a convex program:
  \[
    Q_{Y\mid X}^{-1}(y,x) = \arg\min_{u\in\mathbb{R}^d}\ \varphi(u;x)-y^\top u,
  \]
  whose optimality condition $\nabla_u\varphi(u;x)=y$ recovers the required $u$. This is precisely the evaluation of $\nabla_y\varphi^\star(y;x)$ and can be carried out with off-the-shelf smooth convex solvers; batched inversions reduce to minimizing summed potentials over independent inputs.

  Under mild regularity (convex support and densities), there exists a measurable conditional vector quantile $Q_{Y\mid X}$ that is the gradient (in $u$) of a convex potential and pushes $U$ to $Y\mid X=x$; the inverse rank is the gradient (in $y$) of the conjugate, and $Q_{Y\mid X}$ solves the $W_2$ OT problem conditionally on $x$. Hence the partially convex potential flow inherits both identifiability (a.e.\ uniqueness) and optimality properties in the conditional setting.

\paragraph{Parameterization.}
  We instantiate $\varphi(\cdot;x)$ with partially input–convex networks (e.g., PICNN/PISCNN) to guarantee convexity in $u$ while conditioning on $x$, and add a quadratic $\tfrac{\alpha}{2}\|u\|^2$ when strong convexity is desired. Universality of ICNNs in approximating convex functions then lifts to distributional universality of the induced conditional flows and convergence to the conditional OT maps.

\section{Numerical Implementation}
  This section details architectures, solvers, and training procedures for our neural vector quantile regression (VQR) models, both in the unregularized and entropic-regularized settings, together with the amortized conjugate prediction used to accelerate training. We emphasize implementation choices that preserve convexity/monotonicity and lead to stable gradients, and we provide concrete defaults for reproducibility.


\ecoparagraph{Notation recap.}
  We parameterize a \emph{conditional convex potential} $\varphi_\theta\colon \UC \times \XC \to \RR$ that is convex in $u\in\UC\subset\RR^{\dimy}$ for each fixed $x\in\XC$. The conditional vector quantile and rank maps are the gradients of $\varphi_\theta$ and its Fenchel conjugate $\varphi_\theta^\star$ (see Section~2): 
  \[
    Q_{Y\mid X}(u,x)=\nabla_u\varphi_\theta(u,x), 
    \qquad 
    Q^{-1}_{Y\mid X}(y,x)=\nabla_y\varphi_\theta^\star(y,x).
  \]
  The conjugate evaluation at $(y,x)$ solves $\widehat{u}_{\theta}(y,x)\in\arg\max_{u\in\UC}\{u^\top y-\varphi_\theta(u,x)\}$. By Danskin's theorem, gradients w.r.t.\ $\theta$ do \emph{not} require differentiating through $\widehat{u}_\theta$; only $\nabla_\theta \varphi_\theta$ at $u = \widehat{u}_\theta$ is needed.

\subsection{Partially  Input Convex Neural Networks (PICNN)}
\label{sec:supp:PICNN}
  We instantiate $\varphi_\theta$ as a \emph{Partially Input Convex Neural Networks} (PICNNs; \citealp{amos2017input}) that is convex in $u$ and conditions on $x$:
  \[
    (u,x)\ \longmapsto\ \varphi_\theta(u,x)=\mathrm{PICNN}(u,x;\theta),
  \]
  with layerwise updates
  \[
    \begin{aligned}
      c_{i+1}&=\tilde g_i(\tilde W_i c_i+\tilde b_i),\\
      z_{i+1}&=g_i\Big(W^{(z)}_i \big( z_i \circ \big[ W^{(zc)}_i c_i+b^{(z)}_i \big]_+\big) + W^{(u)}_i\big[ u \circ \big(W^{(uc)}_i c_i+b^{(u)}_i\big)\big]+\ W^{(c)}_i c_i+ b_i\Big), 
    \end{aligned}
  \]
  and output $\varphi_\theta(u,x)=z_K$. We initialize $c_0= x, z_0=0$. Here $\circ$ denotes the element-wise product. We enforce elementwise nonnegativity of $W^{(z)}_i$ and $[ \cdot ]_+$ via a Softplus reparameterization:
  \begin{align}
    W_i^{(z)} &= \log \Bigl(1+\exp\Bigl(\tilde{W}^{(z)}_{i}\Bigr)\Bigr), 
    \quad \tilde{W}^{(z)}_i \in \RR^{p\times k}, \\
    [W^{(zc)}_i c_i+b^{(z)}_i]_+ & = \log \Bigl(1+\exp\Bigl(W^{(zc)}_i c_i+b^{(z)}_i\Bigr)\Bigr).
  \end{align}
  We use convex, non-decreasing activations for $g_i,\tilde g_i$, which guarantees convexity in $u$ while retaining expressive power. We optionally add a quadratic term $\frac{\alpha}{2}\|u\|_2^2$ (trainable $\alpha\ge 0$) to obtain strong convexity, improving stability of the inner argmax~\citep[Proposition~2]{amos2017input}. We choose \texttt{Softplus} as non-linearity for $g_i$ and \texttt{ELU} as non-linearity for $c_i$. Following~\citet{huang2021convex} we utilize activation normalization \texttt{ActNorm} layers~\citep{kingma2018glow} before applying the $g_i$ non-linearity. Final architecture of one iterate hence becomes.
  \[
    \begin{aligned}
      c_{i+1}&=\text{ELU}(\tilde W_i c_i+\tilde b_i),\\
      z_{i+1}&=\text{Softplus}\Big( \text{ActNorm}\Big( W^{(z)}_i \big( z_i \circ \big[ W^{(zc)}_i c_i+b^{(z)}_i \big]_+ \big) \\
      &+ W^{(u)}_i\big( u \circ \big[W^{(uc)}_i c_i+b^{(u)}_i\big]\big) +W^{(c)}_i c_i + b_i\Big)\Big), 
    \end{aligned}
  \]

\ecoparagraph{Practical tips (PICNN).}
  \begin{enumerate}[label=(\roman*)]
    \item Normalize $u$ and $y$ scales (e.g.\ standardization) to ease optimization;

    \item We use weight decay on $\theta$ and (if enabled) a small ridge $\alpha$ to avoid flat directions;

    \item We clip gradients of $\varphi_\theta$ to bound the Lipschitz constant of $u\mapsto\nabla_u\varphi_\theta(u,x)$.
  \end{enumerate}

\subsection{Partially Input Strongly Convex Neural Network (PISCNN)}
\label{sec:supp:PISCNN}
  \[
    \mathrm{PISCNN}(u,x)=\mathrm{PICNN}(u,x)+\frac{\alpha}{2}\,\|u\|_2^2,
  \]
  which is strongly convex in $u$ and yields a \emph{strictly concave} inner objective $u\mapsto u^\top y-\varphi_\theta(u,x)$, ensuring a unique maximizer $\widehat{u}_\theta(y,x)$ and faster, more reliable inner solves. We treat $\alpha$ as positive scalar parametrized by $e^w$, where $w$ is a trainable parameter. In all our implementations, enabling $\alpha>0$ eliminated numerical non-uniqueness in the conjugate and reduced inner iterations.

\subsection{Computing the Conjugate: Inner Maximization}
\label{sec:impl:inner}
  Given $(y,x)$ and current $\theta$, we compute 
  \[
    \widehat{u}_\theta(y,x)\in\arg\max_{u\in\UC}\ J_\theta(u;y,x), \quad J_{\theta}(u; y, x):=u^\top y-\varphi_\theta(u,x).
  \]

\ecoparagraph{Gradient and Hessian.}
  $\nabla_u J_\theta(u;y,x)=y-\nabla_u\varphi_\theta(u,x)$ and $\nabla^2_{uu}J_\theta(u;y,x)=-\nabla^2_{uu}\varphi_\theta(u,x)\preceq 0$. With PISCNN, $\nabla^2_{uu}\varphi_\theta(u, x)\succeq \alpha \Id$ ensures strong concavity.

\ecoparagraph{Solver.}
  We minimize $-J_\theta$ with L-BFGS. For stability:
  \begin{enumerate}
    \item \textbf{Warm start.} We initialize the solver from amortized predictor $\tilde u_\vartheta(y,x)$ when available (see Section~\ref{sec:impl:amort}) or otherwise initialize it at $u\sim F_U$.

    \item \textbf{Domain handling.} If $\UC$ is a ball/hypercube, we project the solution after each step: $u\leftarrow\Pi_{\UC}(u)$.

    \item \textbf{Stopping.} Terminate when $\|\nabla_u J_\theta\|\le \varepsilon_{\text{norm}}$, $\|J_{\theta}(u_i; y, x) - J_{\theta}(u_{i+1}; y, x)\|\le \varepsilon_{\text{obj}}$ or after $K_{\max}$ steps (defaults: $\varepsilon_{\text{norm}}=10^{-7},\varepsilon_{\text{obj}}=10^{-7}, K_{\max}=1000$).
  \end{enumerate}

\subsection{Amortized conjugate prediction}
\label{sec:impl:amort}
  To avoid expensive inner solves at every iteration, we learn a differentiable predictor $\tilde u_\vartheta:\YC\times\XC\to\UC$ that approximates $\check{u}_\vartheta(y,x)$ and serves as a warm start for L-BFGS solver. We parametrize $\tilde u_\vartheta(y,x)$ as feed forward neural network with a residual skip connection to encourage identity at initialization
  \[
    \tilde u_\vartheta(y,x)=\mathrm{MLP}_\vartheta
    \Big(\begin{bmatrix} 
      y \\ 
      x 
    \end{bmatrix}\Big) + W_y y + b_y.
  \]

\ecoparagraph{Training losses.}
  Several loss functions have been explored in the literature. Objective-based losses~\citep{dam2019three, amos2023amortizingconvexconjugatesoptimal} optimize the network to predict the maximum of the conjugate by maximizing $\VC_{\text{obj}} = \EE_{(Y,X) \sim F_{YX}}[ J_{\theta}(\tilde{u}_{\paramamor};y, x)]$. Alternatively, one may enforce the first-order condition $\nabla_u \varphi_{\theta}(u, x)|_{u= \tilde{u}_{\paramamor}(y, x)} \approx y$ via the residual loss $\VC_{\text{res}}(\paramamor) = \EE_{(Y,X) \sim F_{YX}}[\lVert \nabla_u \varphi_{\theta}(u, x)|_{u= \tilde{u}_{\paramamor}(y, x)} - y \rVert^2_2]$. If the true conjugate $\check{u}_{\varphi_\theta(\cdot, x)}(y)$~\eqref{eq:conditional-convex-conjugate} is available, one can regress directly with $\VC_{\tilde{u}} = \EE_{(Y,X) \sim F_{YX}}[\lVert \tilde{u}_{\paramamor}(y, x) - \check{u}_{\varphi_\theta(\cdot, x)} (y)\rVert^2_2]$. In practice, we observe no major differences between these approaches and therefore adopt $\VC_{\tilde{u}}$ as our loss of choice (see \Cref{alg:amoritzed_neural_quantile_regression_training}).




\subsection{Entropic-regularized Semi-dual}
\label{sec:impl:entropic}
  When using the entropic semi-dual $U_\varepsilon(\varphi)$ (see \Cref{supp:entropy_regularized_neural_optimal_transport}), we replace the hard conjugate with a log-sum-exp:
  \[
    U_\varepsilon(\varphi_\theta)=\E_{(X,U)}[\varphi_\theta(U,X)] + \E_{(X,Y)}\left[\,\varepsilon\log\E_{U\sim F_U}\exp\left(\frac{U^\top Y-\varphi_\theta(U,X)}{\varepsilon}\right)\right].
  \]

\ecoparagraph{Monte Carlo and stability.}
  We approximate the inner expectation with $m$ i.i.d.\ samples $U_j\sim F_U$, using a numerically stable log-sum-exp with $64$-bit accumulation. We found $m\in[512, 1024]$ adequate on our benchmarks, and we re-sample the $U_j$ each iteration. In the $\varepsilon\downarrow 0$ limit, this recovers the unregularized loss. We intentionally set high amount of samples for dual objective estimation to avoid effects related to high bias of logsumexp estimator. 

\ecoparagraph{Gradients.} 
  The gradient has a positive-minus-negative phase form using the Gibbs weights (see \Cref{supp:entropy_regularized_neural_optimal_transport} and equation~\eqref{eq:gradient}), which we implement without storing the full batch $\times m$ tensor by streaming accumulation.

\subsection{Training Loops and Algorithms}
\label{sec:impl:train-loops}
  We describe three loops: (i) Neural Vector Quantile Regression without amortization \Cref{alg:neural_quantile_regression_training}, (ii) Amortized Vector Quantile Regression \Cref{alg:amoritzed_neural_quantile_regression_training}, and (iii) Entropic Semi-dual \Cref{alg:entropic_semi_dual_training}. All use \texttt{AdamW} (initial LR of $10^{-2}$, weight decay $10^{-4}$) with cosine annealing (LR decaying to $0$), batch size $1024$, and gradient clipping at $10$. We sample $U\sim F_U$ as standard Gaussian unless otherwise noted. See \Cref{sec:supp:optimal_transport_experiments_datasets} for dataset-specific details. We use warm restarts for amortized network, restarting the learning rate to $10^{-2}$ each 10 epochs.

  \begin{algorithm}[h!]
    \caption{Neural Vector Quantile Regression Training (C-NQR)}
    \label{alg:neural_quantile_regression_training}
    \begin{algorithmic}[1]
      \State \textbf{Input:} dataset $\{(x_i, y_i)\}_{i = 1}^n$, PICNN $\varphi_\theta\colon \UC \times \XC \to \RR$
      \State Sample mini-batch $\BC \subset \{1, \dots, n\}$
      \State Initialize $\VC_\varphi \gets 0$
      \For{each $i \in \BC$}
        \State $\check{u}_i \gets \arg\max_{u \in \UC} J_{\varphi_\theta(\cdot, x_i)}(u, y_i)$ \Comment{Run L-BFGS for each $y_i$ starting at $u = 0$}
        \State $\widehat{\psi}_i(\theta) \gets J_{\varphi_\theta(\cdot, x_i)}(\check{u}_i, y_i)$
        \State Sample $u_i \sim \NC(0,I_d)$
        \State $\widehat{\varphi}_i(\theta) \gets \varphi_\theta(u_i, x_i)$
        \State $\widehat{\VC}_{\varphi}(\theta) \gets \widehat{\VC}_{\varphi}(\theta) + \widehat{\psi}_i(\theta) + \widehat{\varphi}_i(\theta)$
      \EndFor
      \State Compute $\nabla_\theta \frac{1}{|\mathcal{B}|} \widehat{\VC}_\varphi(\theta)$ \Comment{Do not propagate gradients through $\check{u}$}
      \State Update $\theta$ with Adam
    \end{algorithmic}
  \end{algorithm}

  \begin{algorithm}[h!]
    \caption{Amortized Neural Vector Quantile Regression Training (AC-NQR)}  \label{alg:amoritzed_neural_quantile_regression_training}
    \begin{algorithmic}[1]
      \State \textbf{Input:} dataset $\{(x_i, y_i)\}_{i=1}^n$, PICNN $\varphi_\theta\colon \UC \times \XC \to \RR$, $\tilde{u}_{\paramamor}(y, x)\colon \YC \times \XC \to \UC$
      \State Sample mini-batch $\BC \subset \{1, \dots, n\}$
      \State Initialize $\VC_\varphi \gets 0, \VC_{\tilde{u}} \gets 0$
      \For{each $i \in \BC$}
        \State $\tilde{u}_i \gets \tilde{u}_{\paramamor}(y_i, x_i)$
        \State $\check{u}_i \gets \arg\max_{u \in \UC} J_{\varphi_\theta(\cdot,x_i)}(u, y_i)$ \Comment{Run L-BFGS for each $y_i$ starting at $u = \tilde{u}_i$}
        \State $\widehat{\psi}_i(\theta) \gets J_{\varphi_\theta(\cdot,x_i)}(\check{u}_i, y_i)$
        \State Sample $u_i \sim \NC(0,I_d)$
        \State $\widehat{\varphi}_i(\theta) \gets \varphi_\theta(u_i,x_i)$
        \State $\widehat{\VC}_{\varphi}(\theta) \gets \widehat{\VC}_{\varphi}(\theta) + \widehat{\psi}_i(\theta) + \widehat{\varphi}_i(\theta)$
        \State $\widehat{\VC}_{\tilde{u}}(\paramamor) \gets \widehat{\VC}_{\tilde{u}}(\paramamor) + \lVert \tilde{u}_i - \check{u}_i \rVert_2^2$ 
      \EndFor
      \State Compute $\nabla_\theta \frac{1}{|\mathcal{B}|}\widehat{\VC}_\varphi(\theta)$ and $\nabla_{\paramamor} \frac{1}{|\mathcal{B}|}\widehat{\VC}_{\tilde{u}}(\paramamor)$ \Comment{Do not propagate gradients through $\check{u}$}
      \State Update $\theta$ and $\paramamor$
    \end{algorithmic}
  \end{algorithm}

  \begin{algorithm}[h!]
    \caption{Entropic semi-dual training (EC-NQR)}
    \label{alg:entropic_semi_dual_training}
    \begin{algorithmic}[1]
      \State \textbf{Input:} dataset $\{(x_i, y_i)\}_{i=1}^n$, PICNN $\varphi_\theta\colon \UC \times \XC \to \RR$
      \State Sample mini-batch $\BC \subset \{1, \dots, n\}$
      \State initialize $\LC_\varphi \gets 0$
      \State Sample i.i.d. $u_{ij} \sim F_U$
      \For{each $i \in \BC$}
        \State $\widehat{\psi}_i(\theta) \gets \epsilon \log \sum_{j=1}^m\text{exp}\left(\frac{u_{ij}^Ty_i - \varphi_\theta(u_{ij}, x_i)}{{\epsilon}} \right)$ \Comment{$\varepsilon\in[10^{-3},10^{-1}]$}
        \State Sample $u_i \sim F_U$
        \State $\widehat{\varphi}_i(\theta) \gets \varphi_{\theta}(u_i, x_i)$; 
        \State $\LC_{\varphi}(\theta) \gets \LC_{\varphi}(\theta) + \widehat{\psi}_i(\theta) + \widehat{\varphi}_i(\theta)$
      \EndFor
      \State Compute $\nabla_\theta \frac{1}{|\mathcal{B}|}\LC_\varphi(\theta)$
      \State Update $\theta$ with Adam
    \end{algorithmic}
  \end{algorithm}

\subsection{Conformal Methods Implementation}
\label{sec:impl:conformal}
  Here, we provide a detailed description of our implementation of the methods introduced in Section~\ref{sec:conformal_ot_neural_maps}. For all proposed approaches, we start with an estimate $\widehat{Q}^{-1}_{Y \mid X}(y, x)$ that we obtain using a training set $\DC_\mathrm{train}$. All conformal methods operate on a separate held-out calibration set $\DC_\mathrm{cal}$. Since we need to replicate our uncertainty estimation experiments for multiple splits and datasets, we use the Amortized Neural Vector Quantile Regression version of our algorithm.

\ecoparagraph{Split Conformal Prediction with Monge-Kantorovich ranks.}
  An instance of classical split conformal prediction using a score derived from our vector quantile regressor.

  \begin{algorithm}[h!]
    \caption{Pull-back split conformal prediction}
    \label{alg:conformal_pb}
    \begin{algorithmic}[1]
      \State \textbf{Input:} dataset $\DC_\cal = \{(X_i, Y_i)\}_{i=1}^n$, trained model $\widehat{Q}^{-1}_{Y \mid X}(y, x)$, a new test point $(X_\test, Y_\test)$ and the desired nominal miscoverage level $\alpha$
      \For{each $i \in \{1, \dots, n\}$}
        \State $U_i \gets \widehat{Q}^{-1}_{Y \mid X}(Y_i, X_i)$
        \State $S_i \gets \|U_i\|$ 
      \EndFor
      \State $\rho_{1-\alpha} \gets \lceil (n+1)(1-\alpha)\rceil$-th largest $S_i$
      \State $\hat{\CC}^{\pullback}_{\alpha}(X_\test) \gets \left\{y\colon \|\widehat{Q}^{-1}_{Y \mid X}(y, X_\test)\| \le \rho_{1-\alpha} \right\} $
    \end{algorithmic}
  \end{algorithm}

\ecoparagraph{Fixed re-ranking.}
  To account for the misspecification of $\widehat{Q}^{-1}_{Y \mid X}(y, x)$ we introduce an intermediate re-ranking of $U_i$. We follow the approach of~\cite{thurin2025optimal}, but instead of a separate base model, we directly use our estimate: $S_i = U_i = \widehat{Q}^{-1}_{Y \mid X}(Y_i, X_i) \in \RR^{d_y}$. We divide our calibration set into two parts: the first part is used to estimate an OT map $\mathbf{R}\colon \UC \to \UC'$ and the second part is used to conformalize the result. In our experiments, we follow the original authors' approach and use $\UC' = \mathrm{U}(S^{d_y-1})$ - uniform distribution on the unit ball. To evaluate the map $\widehat{\mathbf{R}}$ on the new point, we map it to the corresponding closest point from the first calibration part.

  \begin{algorithm}[h!]
    \caption{Re-ranked pull-back split conformal prediction}
    \label{alg:conformal_rerank}
    \begin{algorithmic}[1]
      \State \textbf{Input:} dataset $\DC_\cal = \{(X_i, Y_i)\}_{i=1}^{n=n_1+n_2}$, trained model $\widehat{Q}^{-1}_{Y \mid X}(y, x)$, a new test point $(X_\test, Y_\test)$ and the desired nominal miscoverage level $\alpha$
      \For{each $i \in \{1, \dots, n_1\}$}
        \State $U_i \gets \widehat{Q}^{-1}_{Y \mid X}(Y_i, X_i)$
      \EndFor
      \State Estimate $\widehat{\mathbf{R}}$ using sample $(\{U_i\}_{i=1}^{n_1}, \{U'_i\}_{i=1}^{n_1})$ \Comment{$\{U'_i\}_{i=1}^{n_1}$ - reference sample from $\UC'$}
      \For{each $j \in \{1, \dots, n_2\}$}
        \State $S_j \gets \left\|\widehat{\mathbf{R}}\left(\widehat{Q}^{-1}_{Y \mid X}(Y_j, X_j)\right)\right\|$
      \EndFor      
      \State $\rho_{1-\alpha} \gets \lceil (n_2+1)(1-\alpha)\rceil$-th largest $S_j$
      \State $\hat{\CC}^{\unipullback}_{\alpha}(X_\test) \gets \left\{y\colon \left\|\widehat{\mathbf{R}}\left(\widehat{Q}^{-1}_{Y \mid X}(y, X_\test)\right)\right\| \le \rho_{1-\alpha} \right\} $
    \end{algorithmic}
  \end{algorithm}

  We use the code of~\citet{thurin2025optimal} to estimate $\widehat{\mathbf{R}}$ (we divide the original calibration set into two equal parts). This implementation uses the renowned POT library~\citep{flamary2021pot}, which provides efficient implementations of the various optimal transport techniques.

\subsection{Hyperparameters and Default Configuration}
\label{sec:impl:hparams}
  \begin{table}[t!]
  \label{table:hyperparams}
    \caption{Model hyperparameters for different datasets.}
    \centering
    \begin{tabular}{lccc}
      \toprule
      \textbf{Dataset(s)} & \textbf{Layer width} & \textbf{Layer depth} & \textbf{Batch size} \\
      \midrule
      \texttt{bio} & 12 & 4 & 512 \\
      \texttt{blog} & 16 & 4 & 512 \\
      \texttt{sgemm} & 46 & 4 & 8192 \\
      \texttt{scm20d} & 10 & 1 & 2048 \\
      \textit{Banana}, \textit{Convex Banana}, \textit{Star}, \textit{Convex Star} & 18 & 8 & 256 \\
      \textit{Glasses}, \textit{Convex Glasses}, \textit{Funnel} & 18 & 8 & 256 \\
    \bottomrule
    \end{tabular}
  \end{table}

  \begin{itemize}
    \item \textbf{Network sizes.} We typically use around $10\%$ of available data as parameters scale. See \Cref{table:hyperparams} for details.
    
    \item \textbf{Optimization.} AdamW (LR $10^{-2}$, weight decay $10^{-4}$). We use cosine warm restart for amortization network every $5$k–$10$k steps; We clip gradients at $1.0$.
    \item \textbf{Inner solver.} L-BFGS with Wolfe line search, $K_{\max}=50$ (amortized) or $100$ (no amortization); tolerance $10^{-5}$; domain projection when $\UC$ is bounded.
    \item \textbf{Amortizer.} Amortization network copies the potential network architecture in all our experiments.

    \item \textbf{Entropic.} In all our experiments we fix $\varepsilon=0.001$; $m=1024$ Monte Carlo samples per $(x,y)$.
  \end{itemize}
  These defaults matched the settings used across \Cref{sec:optimal_transport_experiments} and \Cref{sec:conformal_experiments} (metrics and datasets).

\section{Deferred Content for Conformal Prediction}
\label{sec:supp:conf_theory}

 We now proceed to provide the deferred content from \Cref{sec:conformal_ot_neural_maps}. We start by restating \Cref{thm:radial_volume_optimality} and its proof. Then, we showcase a setting where the assumptions of \Cref{thm:radial_volume_optimality} are met. Finally, we explain how the OT maps $Q_{Y|X}$ and $Q^{-1}_{Y|X}$ may be used to construct conformal sets using density estimation.

 \begin{theorem}[Volume–optimality of pullback balls under radiality]
    Fix $x \in \XC$ and reference distribution $F_U(u) = \phi(\|u\|)$ for a continuous $\phi\colon [0,\infty) \to (0,\infty)$ on $\UC$, under the assumptions of \Cref{thm:carlier-existence}, let $Q_{Y|X}$ and $Q^{-1}_{Y|X}$ be the vector quantile and multivariate rank functions. Assume that there exists $j_x$ such that for all $y$ in the support of $F_{Y|X}$, it holds
    \begin{equation*}
      \det \closed{\nabla_y Q^{-1}_{Y|X}(y, x)} = j_x\open{\|Q^{-1}_{Y|X}(y, x)\|},
    \end{equation*}
    and the function $r \mapsto \phi(r) \, j_x(r)$ is strictly decreasing. Let $r_\alpha > 0$ be the unique radius satisfying $\mu(B_{r_\alpha}) = 1 - \alpha$, where $\mu$ is the law corresponding to $F_U$ and $B_r=\{u\colon \|u\| \le r\}$. Define the pullback ball $\CC^\pullback_\alpha(x)\coloneqq \ens{y\colon \|Q^{-1}_{Y|X}(y, x)\| \leq r_\alpha}$. Then, $\CC^\pullback_\alpha(x)$ minimizes Lebesgue volume among all sets with $x$-conditional coverage of at least $1 - \alpha$, i.e., for every measurable $A \subset \YC_x$ with $\PP\{Y \in A \mid X = x\} \ge 1 - \alpha$, $\vol\bigl(\CC^\pullback_\alpha(x)\bigr) \le \vol(A)$.
  \end{theorem}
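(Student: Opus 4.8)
The plan is to show that, under the radiality hypotheses, the pullback ball $\CC^\pullback_\alpha(x)$ coincides with the highest-density region of $F_{Y\mid X=x}$ at a suitable threshold, and then to invoke the classical Neyman--Pearson (layer-cake) argument that such regions minimize Lebesgue volume at fixed coverage.

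First I would express the conditional density through the rank map. By \Cref{thm:carlier-existence} and \Cref{theorem:existance_of_solution}, for each fixed $x$ the map $Q^{-1}_{Y\mid X}(\cdot,x)=\nabla_y\psi(\cdot,x)$ is the gradient of a convex potential and pushes $F_{Y\mid X=x}$ forward to $F_U$; since both measures admit densities (\Cref{assum:density-reference} and \Cref{assum:absolute_continuity_cond_distr}), the change-of-variables (Monge--Amp\`ere) identity gives
\[
  f_{Y\mid X}(y\mid x)=f_U\bigl(Q^{-1}_{Y\mid X}(y,x)\bigr)\,\det\bigl[\nabla_y Q^{-1}_{Y\mid X}(y,x)\bigr]
\]
for $F_{Y\mid X=x}$-a.e.\ $y$, with the determinant nonnegative because the Jacobian is the Hessian of a convex function. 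Substituting the radial forms $f_U(u)=\phi(\|u\|)$ and $\det[\nabla_y Q^{-1}_{Y\mid X}(y,x)]=j_x(\|Q^{-1}_{Y\mid X}(y,x)\|)$ yields $f_{Y\mid X}(y\mid x)=g(\|Q^{-1}_{Y\mid X}(y,x)\|)$, where $g(r):=\phi(r)\,j_x(r)$ is strictly decreasing by hypothesis.

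Next I would identify $\CC^\pullback_\alpha(x)$ as a density superlevel set and run the optimality argument. Setting $t:=g(r_\alpha)$, strict monotonicity of $g$ gives $\CC^\pullback_\alpha(x)=\{y\in\YC_x:\|Q^{-1}_{Y\mid X}(y,x)\|\le r_\alpha\}=\{y\in\YC_x:f_{Y\mid X}(y\mid x)\ge t\}$, and $t>0$ (else $g\equiv 0$ on $[r_\alpha,\infty)$, so $F_{Y\mid X=x}$ would be supported in $\{\|Q^{-1}_{Y\mid X}(\cdot,x)\|<r_\alpha\}$, of mass $1-\alpha<1$). Its $x$-conditional coverage is exactly $1-\alpha$, because $Q^{-1}_{Y\mid X}(Y,X)\mid X=x\sim F_U$ and hence $\PP\{Y\in\CC^\pullback_\alpha(x)\mid X=x\}=\mu(B_{r_\alpha})=1-\alpha$. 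Now let $A\subset\YC_x$ be measurable with $\PP\{Y\in A\mid X=x\}\ge 1-\alpha$, abbreviate $C:=\CC^\pullback_\alpha(x)$ and $f:=f_{Y\mid X}(\cdot\mid x)$; then
\[
  0\ \le\ \int_A f-\int_C f\ =\ \int_{A\setminus C} f-\int_{C\setminus A} f\ \le\ t\,\vol(A\setminus C)-t\,\vol(C\setminus A),
\]
using $f\le t$ on $A\setminus C\subset\YC_x\setminus C$ and $f\ge t$ on $C\setminus A\subset C$. Since $t>0$ this forces $\vol(C\setminus A)\le\vol(A\setminus C)$, whence $\vol(C)=\vol(C\cap A)+\vol(C\setminus A)\le\vol(C\cap A)+\vol(A\setminus C)=\vol(A)$, which is the claim; the same display also shows $C$ is an HPD region, giving the remark following the theorem.

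I expect the main obstacle to be the first step: rigorously justifying the change-of-variables formula for the Brenier-type map $Q^{-1}_{Y\mid X}(\cdot,x)$. This requires invoking Alexandrov second-order differentiability of the convex potential $\psi(\cdot,x)$ together with McCann's theorem to convert the pushforward relation into the stated pointwise density identity, while carefully tracking the Lebesgue-a.e.\ qualifiers and the possibility of a degenerate Jacobian on a null set; the subsequent superlevel-set identification and the layer-cake volume comparison are routine.
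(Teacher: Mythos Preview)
Your proposal is correct and follows essentially the same route as the paper: express $f_{Y\mid X}(\cdot\mid x)$ via the change-of-variables formula for the rank map, use the radiality assumptions to write it as a strictly decreasing function of $\|Q^{-1}_{Y\mid X}(y,x)\|$, identify $\CC^\pullback_\alpha(x)$ as a density superlevel set, and conclude volume optimality from the HPD characterization. On the regularity concern you flag as the main obstacle, the paper sidesteps Alexandrov/McCann machinery by invoking \citet[Corollary~2.1]{carlier2016vector} to get $C^1$ smoothness of $S_x=Q^{-1}_{Y\mid X}(\cdot,x)$ directly, which makes the change-of-variables formula classical; conversely, your explicit Neyman--Pearson inequality $\int_{A\setminus C}f-\int_{C\setminus A}f\le t\,[\vol(A\setminus C)-\vol(C\setminus A)]$ is more complete than the paper's proof, which stops at identifying the HPD region and treats its volume optimality as known.
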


 \begin{proof}
    Let $S_x(\cdot) = Q^{-1}_{Y|X}(\cdot)$. Then, by the change of variables formula for densities:
    \[
      f_{Y\mid X}(y,x)
      = f_U\big(S_x(y)\big)\,\left|\det\bigl[\nabla_y S_x(y)\bigr]\right|.
    \]
    Using the assumption that $f_U(u) = \phi(\|u\|)$ and    $\det\big[\nabla_y S_x(y)\big]=j_x\big(\|S_x(y)\|\big)$.     Using \citet[Corollary~2.1]{carlier2016vector}, we note that $S_x$ is $C^1$ and the derivative of a convex function. Thus, it holds that $y\rightarrow \det \closed{\nabla_y S_x(y)}$ is positive and continuous, which allow for dropping absolute value to recover
    \[
      f_{Y\mid X}(y,x)
      = \phi\big(\|S_x(y)\|\big)\,j_x\big(\|S_x(y)\|\big)
      =: h_x\big(\|S_x(y)\|\big).
    \]
    As both $\phi$ and $y\rightarrow j_x(\|S_x(y)\|)$ are continuous, $h_x$ is a strictly decreasing continuous invertible function. Hence, $f_{Y\mid X}(\cdot, x)$ is a non-increasing function of the $U$–radius $\|S_x(y)\|$ and its superlevel sets are pullbacks of Euclidean balls:
    for each $t>0$ there exists $r(t)\ge0$ such that
    \[
      \{y\colon f_{Y\mid X}(y,x)\ge t\}
      = \{y\colon h_x(\|S_x(y)\|)\ge t\}
      =\ens{y\colon \|S_x(y)\|\leq r(t)}.
    \]

    We first record the probability identity. For any Borel $A \subset \YC_x$,
    \[
      \PP\{Y \in A \mid X = x\} = \mu\open{\ens{S_x(y)|y\in A }}.
    \]
    Therefore $\PP\{Y \in \CC^\star_\alpha(x) \mid X = x\} = \mu\bigl(B_{r_\alpha}\bigr) = 1 - \alpha$.

    For volume optimality, note that since $f_{Y \mid X}(y,x) = h_x(\|S_x(y)\|)$ with $h_x$ non-increasing, every HPD superlevel set $\{y\colon f_{Y \mid X}(y,x)\ge t\}$ is (almost surely) a pullback set of the form $\ens{y| \|S_x(y)\|\leq r(t)}$. Choosing $t_\alpha$ so that $\PP\{Y \in \{f_{Y\mid X}(\cdot,x)\ge t_\alpha\}\mid X=x\} = 1 - \alpha$ forces $\mu(B_{r(t_\alpha)}) = 1 - \alpha$, hence $r(t_\alpha) = r_\alpha$ and the HPD set equals $\CC^\pullback_\alpha(x)$. 
\end{proof}

 \begin{remark}[Examples satisfying assumptions of \Cref{thm:radial_volume_optimality}]
Fix $x$. Let the reference be spherical with radial, strictly decreasing continuous density $f_U(u)=\phi(\|u\|)$. Suppose $Y\mid X=x$ is elliptical with location $m(x)$ and a positive definite scatter matrix $\Sigma(x)$ whose whitened density uses the same radial generator as $U$, i.e., 
\[
f_{Y\mid X=x}(y)\ \propto\ \phi \left(\left\|\Sigma(x)^{-1/2}\big(y-m(x)\big)\right\|\right).
\]
Then the map $S_x(y)=\Sigma(x)^{-1/2}\big(y-m(x)\big)$ and $\det\big[\nabla_y S_x(y)\big]\equiv \det\big(\Sigma(x)^{-1/2}\big)$. This setting includes the Gaussian case by taking   $\phi(r)\propto e^{-r^2/2}$.

To show that $S_x$ is indeed the optimal transport map, note that $S_x$ is the gradient of convex quadratic function. Thus, it satisfies the Brenier optimal transport conditions for the Euclidean quadratic cost and , by Knott–Smith optimality criterion, it is the vector quantile function \citep{knott1984optimal}.  
\end{remark}

 \ecoparagraph{Conformal HDP Sets using OT Parameterization.}
  While the CQR-like construction in \Cref{sec:conformal_ot_neural_maps} is robust and simple, its prediction sets are images of Euclidean spheres and thus topologically connected since, under \Cref{assum:density-reference} and \Cref{assum:absolute_continuity_cond_distr}, $Q^{-1}_{Y\mid X}$ is continuous by \citet[Corollary~2.1]{carlier2016vector}. This can be inefficient if for some $x\in \XC$, the true conditional distribution $F_{Y\mid X=x}$ is multimodal, for example a Gaussian mixture. To solve this problem, it is possible to construct prediction sets using the level sets of an estimated conditional density, which can naturally form disconnected regions.

  This approach utilizes the change-of-variables formula and leveraging $\widehat{Q}^{-1}_{Y \mid X}$ to  recover the plug-in conditional density estimator
  \[
    \widehat{p}(y \mid x) = f_U\bigl(\widehat{Q}^{-1}_{Y \mid X}(y, x)\bigr) \det\bigl[\nabla_y \widehat{Q}^{-1}_{Y \mid X}(y, x)\bigr].
  \]
  This estimator can then be used to define conformity scores. For each point $(Y_i, X_i)$ in the calibration set $\DC_{\mathrm{cal}}$ we calculate the score $s_i = \widehat{p}(Y_i \mid X_i)$. The prediction set for a new point $X_{\test}$ is the superlevel set of this estimated density, where the level is calibrated to ensure coverage. If $s_{(1)} \le \dots \le s_{(n)}$ are the ordered scores from the calibration set, we set the threshold $\tau = s_{(\lfloor (n+1)\alpha \rfloor)}$. Then, the HPD-style prediction region is given by:
  \[
    \CC^{\mathrm{hpd}}_\alpha(x) = \bigl\{y \in \YC\colon \widehat{p}(y \mid x) \ge \tau \bigr\}.
  \]
  By standard arguments, this set fulfills the marginal coverage guarantee $\PP_{(Y,X) \sim F_{YX}}(Y \in \CC^{\mathrm{hpd}}_\alpha(X)\} \ge 1-\alpha$. Crucially, if the learned map $\widehat{Q}^{-1}_{Y\mid X}$ recovers the true rank map, then $\widehat{p}(\cdot \mid x)$ recovers the true conditional density, and the resulting prediction set is exactly the true HPD region.

 \ecoparagraph{Related density–based approaches.} The idea of using density estimation to construct conformal sets has been exploited in recent related works. For example, in the setting with $\YC \subseteq \rset$, \textit{CD-split} partition $\XC$ into multiple splits, leverage a conditional density estimator $\hat{f}(y\mid x)$, and perform conformal calibration in split-wise manner to improve conditional coverage \citep{izbicki2022cd}. Furthermore, also in the setting with $\YC \subseteq \rset$, \textit{SPICE} learns a neural conditional density via deep splines and uses negative log-density/HPD scores to construct the conformal sets \citep{diamant2024conformalized}. 

\begin{remark} To construct conformal sets using density estimation, the estimator of $\widehat{p}(y\mid x)$ requires the Jacobian of $\hat{Q}^{-1}_{Y|X}$. Even if $\widehat{Q}^{-1}_{Y|X}$ approximates $Q^{-1}_{Y|X}$, $\nabla_y\widehat{Q}^{-1}_{Y|X}$ may not necessary approximate well $\nabla_y Q^{-1}_{Y|X}$.
Empirically, small errors in the Jacobian can be magnified in $\det(\cdot)$, which distorts HPD superlevel sets. As shown in \Cref{sec:optimal_transport_experiments}, in our experiments, $\widehat{Q}_{Y|X}$ approximated well the true quantile function. Nonetheless, we found the HDP approach of producing conformal sets empirically suboptimal w.r.t. the volume of the produced set and conditional coverage. 
\end{remark}

\section{Detailed Experimental Results}
\label{sec:supp:experiment_details}

\subsection{Optimal Transport Metrics} 
\label{sec:supp:optimal_transport_experiments_metrics}
\begin{itemize}
    \item \textbf{Wasserstein distances.} We compute Wasserstein-2 and Sliced Wasserstein distances using the \emph{POT} library \cite{flamary2021pot}.
    
    \item \textbf{KDE-L1.} To estimate the $L^1$ distance between kernel density estimators, we draw $1000$ samples from both $Q^{-1}_{Y \mid X}$ and its approximation $\widehat{Q}^{-1}_{Y \mid X}$. We then fit Gaussian kernel density estimates to each sample set and report the average pointwise $L^1$ difference between the two densities, evaluated at points drawn from $Q^{-1}_{Y \mid X}$.
    
    \item \textbf{KDE-KL.} The Kullback–Leibler divergence is computed following the same procedure as KDE-L1. We report the average pointwise KL divergence between the fitted densities at points drawn from $Q^{-1}_{Y \mid X}$.
    
    \item \textbf{L2-UV.} To compute the unexplained variance ratio, we sample $n_u$ points from $u_{\text{test}} \sim F_U$ and $n_x$ points from $x_{\text{test}} \sim F_X$. The L2-UV distance is then defined as
    \[
    \frac{1}{n_x + n_u} \sum_{x_{\text{test}}, u_{\text{test}}} 
    \frac{\lVert Q_{U \mid X}(u_{\text{test}}, x_{\text{test}})  - \widehat{Q}_{U \mid X}(u_{\text{test}}, x_{\text{test}}) \rVert_2}
         {\big\lVert \tfrac{1}{n_u}\sum_{u_{\text{test}}} Q_{U \mid X}(u_{\text{test}}, x_{\text{test}}) - Q_{U \mid X}(u_{\text{test}}, x_{\text{test}}) \big\rVert_2}.
    \]
\end{itemize}

\subsection{Optimal Transport experiments datasets} 
\label{sec:supp:optimal_transport_experiments_datasets}
\paragraph{Banana Dataset.}
  This dataset is largely used in vector quantile estimation for testing the non-linearity of estimators. It was introduced in~\citep{feldman2023calibrated} and used in~\citep{carlier2017vector,rosenberg2023fast}. It represents a banana-shaped random variable in $\RR^2$, changing its position and skewness based on latent random variable from $\RR^1$. Data generative process can be described as:
  \begin{align*}
    X &\sim \mathcal{U}[0.8, 3.2], \quad 
    Z \sim \mathcal{U}[-\pi, \pi], \quad 
    \varphi \sim \mathcal{U}[0, 2\pi], \quad 
    r \sim \mathcal{U}[-0.1, 0.1], \\
    \hat{\beta} &\sim \mathcal{U}[0, 1]^k, \quad 
    \beta = \frac{\hat{\beta}}{\|\hat{\beta}\|_1}, \\
    Y_0 &= \tfrac{1}{2} \left(-\cos(Z) + 1\right) + r \sin(\varphi) + \sin(X), \\
    Y_1 &= \frac{Z}{\beta X} + r \cos(\varphi), \\
    &\mathbf{X} = X, \mathbf{Y} = \begin{bmatrix} Y_0 \\ Y_1 \end{bmatrix}.
  \end{align*}
  We take $\mathbf{X}$ as and $\mathbf{Y}$ as observed random variables.

  Full set of metrics for Banana dataset is accessible at \cref{fig:banana_results}. Metrics for convex potential, that was trained on Banana dataset can be found at \cref{fig:convex_banana_results}.

\paragraph{Rotating Star.}
  This dataset is inspired by~\citep{rosenberg2023fast} rotating star example. Observed random variable represents a three point star in $\RR^2$ that rotates based on latent variable from $\RR$. Data generative process can be described as:
  \begin{align*}
    (u_0, u_1) &\sim \NC(0, I), \quad X \sim \UC\Big[0, \frac{2}{3}\Big], \\
    \theta &= \arctan\!\left(\tfrac{u_1}{u_0}\right), \quad s(\theta) = 1 + 3\cos(3 \theta), \\
    \mathbf{R}(\varphi) &= 
    \begin{bmatrix}
      \cos \varphi & -\sin \varphi \\
      \sin \varphi & \cos \varphi
    \end{bmatrix}, \\
    \mathbf{Y} &= \mathbf{R}(\varphi)\bigl( s(\theta) u_0, s(\theta) u_1 \bigr)^\top, \mathbf{X}=X,
  \end{align*}
  where $\varphi$ is a rotation angle. We take $\mathbf{X}, \mathbf{Y}$ as observed variables.
  
  Full set of metrics for Star dataset is accessible at \Cref{fig:star_results}. Metrics for convex potential, that was trained on Star dataset can be found at \Cref{fig:convex_star_results}
  
\paragraph{Glasses.}
  This dataset is introduced in~\citep{brando2022deep}. It represents two modal distribution, where random variable is in $\RR$. With $X\sim \UC[0, 1]$, data generative process can be described as:
  \begin{align*}
    z_{1} &= 3\pi X, \quad z_{2} = \pi (1 + 3X), \quad \epsilon \sim \text{Beta}(\alpha = 0.5, \beta = 1), \\
    Y_{1} &= 5 \sin(z_{1}) + 2.5 + \epsilon, \quad Y_{2} = 5 \sin(z_{2}) + 2.5 - \epsilon, \\
    \gamma &\sim \text{Categorical}(0, 1), \\
    \mathbf{Y} &= (1-\gamma)Y_1 + \gamma Y_2.
  \end{align*}
  We take $\mathbf{X}, \mathbf{Y}$ as observed variables.
  Full set of metrics for Glasses dataset is accessible at \Cref{fig:glasses_results}. Metrics for convex potential, that was trained on Glasses dataset can be found at \Cref{fig:convex_glasses_results}

\paragraph{Neal's funnel distribution.}
  The classical funnel distribution~\citep{neal2003slice} is defined on
  $\mathbb{R}^{d+1}$ as
  \[
    v \sim \mathcal{N}(0, \sigma^2),
    \qquad
    x_i \mid v \;\sim\; \mathcal{N}\!\big(0, \exp(v)\big),
    \quad i=1,\dots,d,
  \]
  so that the joint density of $(v, x_1,\dots,x_d)$ is
  \[
    p(v, x) =
    \frac{1}{\sqrt{2\pi\sigma^2}} \exp\left(-\frac{v^2}{2\sigma^2}\right)
    \prod_{i=1}^d \frac{1}{\sqrt{2\pi e^v}}
    \exp\left(-\tfrac{x_i^2}{2 e^v}\right).
  \]
  For large negative values of $v$, the conditional variance of the $x_i$’s
  shrinks, yielding a narrow region (the ``neck’’ of the funnel),
  whereas large positive $v$ produces very diffuse $x_i$’s (the ``mouth’’).
  This strong nonlinearity makes the distribution challenging for MCMC methods.
  
\paragraph{Multidimensional funnel.}
  A natural generalization introduces a $k$-dimensional scale vector
  $v = (v_1,\dots,v_k)$ with
  \[
    v_j \sim \mathcal{N}(0, \sigma^2),
    \qquad
    x_{j,\ell} \mid v_j \;\sim\; \mathcal{N}\!\big(0, \exp(v_j)\big),
    \quad \ell = 1,\dots,m,
  \]
  so that each $v_j$ controls a block of $m$ Gaussian variables.
  The joint distribution then lives in dimension $k(1+m)$ and exhibits
  multiple funnel directions simultaneously.
  This high-dimensional geometry is frequently used as a stress test
  for MCMC and normalizing flow methods.


\subsection{Detailed Results of the Conformal Prediction Experiments}
\label{sec:impl:conformal}
We present more detailed results on conditional coverage on real datasets, involving more variations of our methods and more nominal levels $\alpha$. 

\paragraph{Methods.} We include the HPD variant of our method as well as models estimating either the forward (U) or the inverse (Y) quantile map. 

For methods labeled with Y, we model the function $\psi$ with a neural network and have $\widehat{Q}^{-1}_{Y \mid X}(y, x) = \nabla_y\psi(y, x)$. For methods labeled with U we model function $\varphi$ and get $\widehat{Q}_{Y \mid X}(y, x) = \nabla_u\varphi(u, x)$. 

Method \textrm{Quantile} corresponds to using the Monge-Kantorovich rank to construct the predictive regions, assuming that we have found exactly the mapping to the reference standard multidimensional normal distribution. In this particular case, the squared ranks follow the Chi-square distribution and the corresponding radius for the construction of the pullback-type prediction set can be found exactly. 

The methods labeled with RF correspond to fitting our model to the residuals of $s = y - \hat{f}(x)$ of a base Random Forest predictor $\hat{f}$. Base predictor uses 25\% of the training data and remainder is used to train our model.

\begin{figure}[t!]
  \label{fig:big_wsc}
    \centering
    \includegraphics[width=\textwidth]{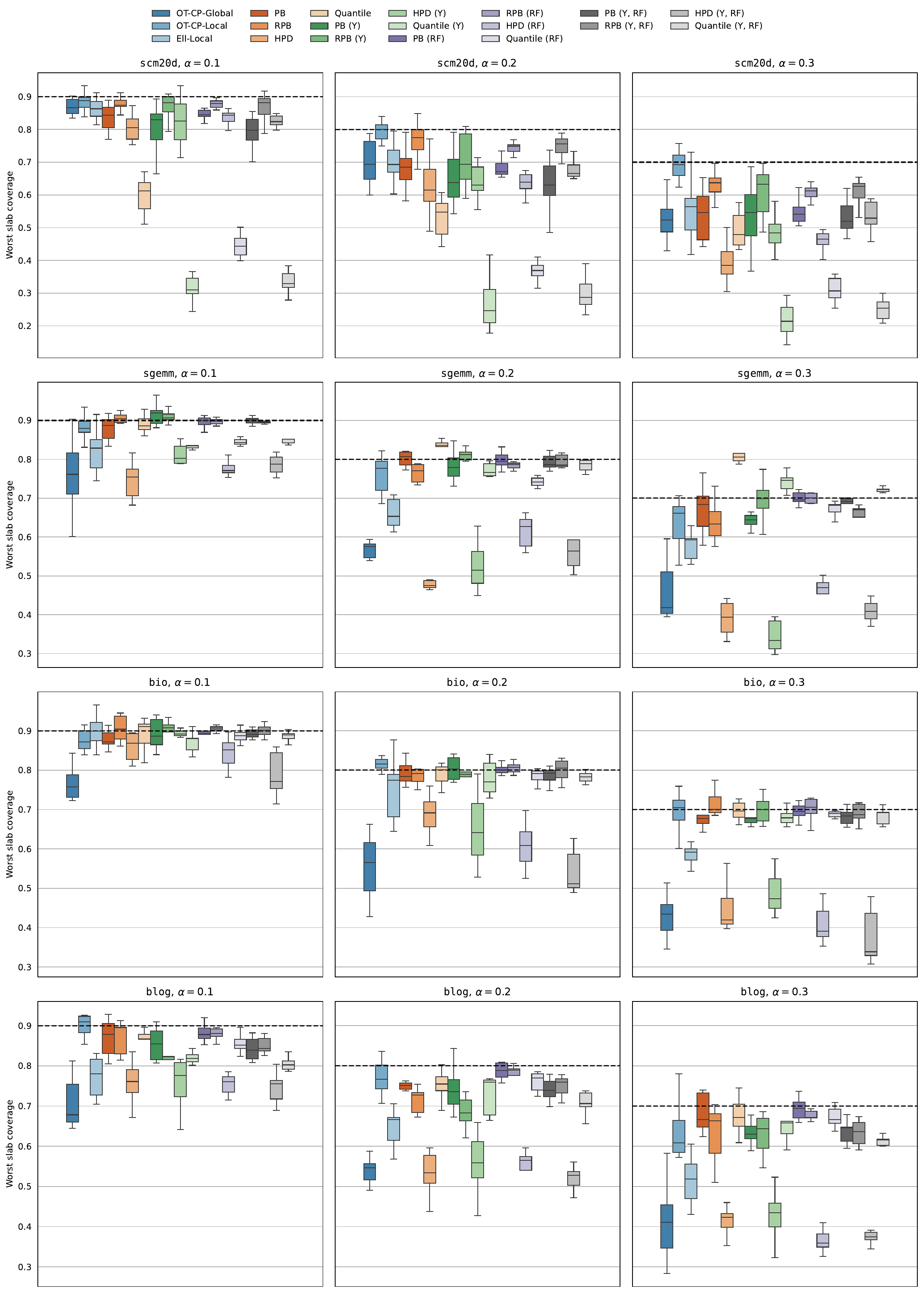}
    \caption{Worst slab coverage at different nominal miscoverage $\alpha$ levels for conformal prediction methods, achieved on large datasets.}
  \end{figure}

\paragraph{Implementation details.} For baseline methods we use the original authors implementation, where available and their suggested values for hyperparameters. For our methods, we select the number of parameters for neural networks to be roughly 10\% of the number of training samples. We tune the other hyperparameters for each dataset using a separate data split and utilize the mean coverage error of the pullback sets at different levels of $\alpha$ as a performance measure. All experiments were replicated using 10 random splits of the data into training, calibration, and test parts.

\paragraph{Discussion.} The Quantile method fails to achieve the nominal levels of conditional coverage, which suggests that a supporting measure like conformal prediction is indeed required. Unfortunately, HPD approaches do not perform well on many occasions, proving that density estimation in multiple dimensions is still a difficult to solve problem.

Using a base model and fitting quantile regression to the residuals instead of directly $Y$ provides less variable results, but does not always improve performance of our methods.

   \begin{figure}[t]
    \centering
    \includegraphics[width=\linewidth]{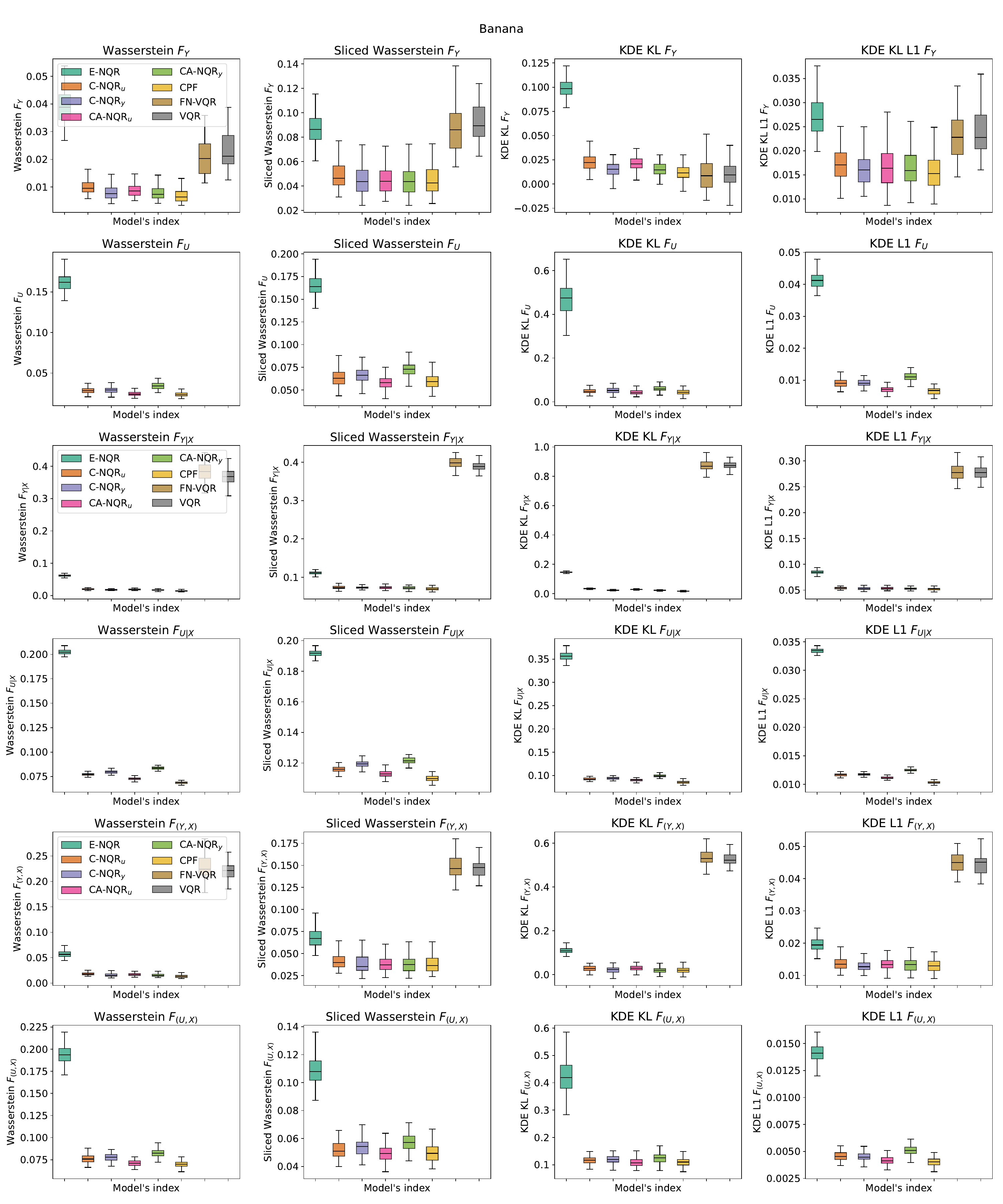}
    \vspace{-20pt}
    \caption{Full set of metrics for Banana dataset.}
    \label{fig:banana_results}
  \end{figure}

  \begin{figure}[t]
    \centering
    \includegraphics[width=\linewidth]{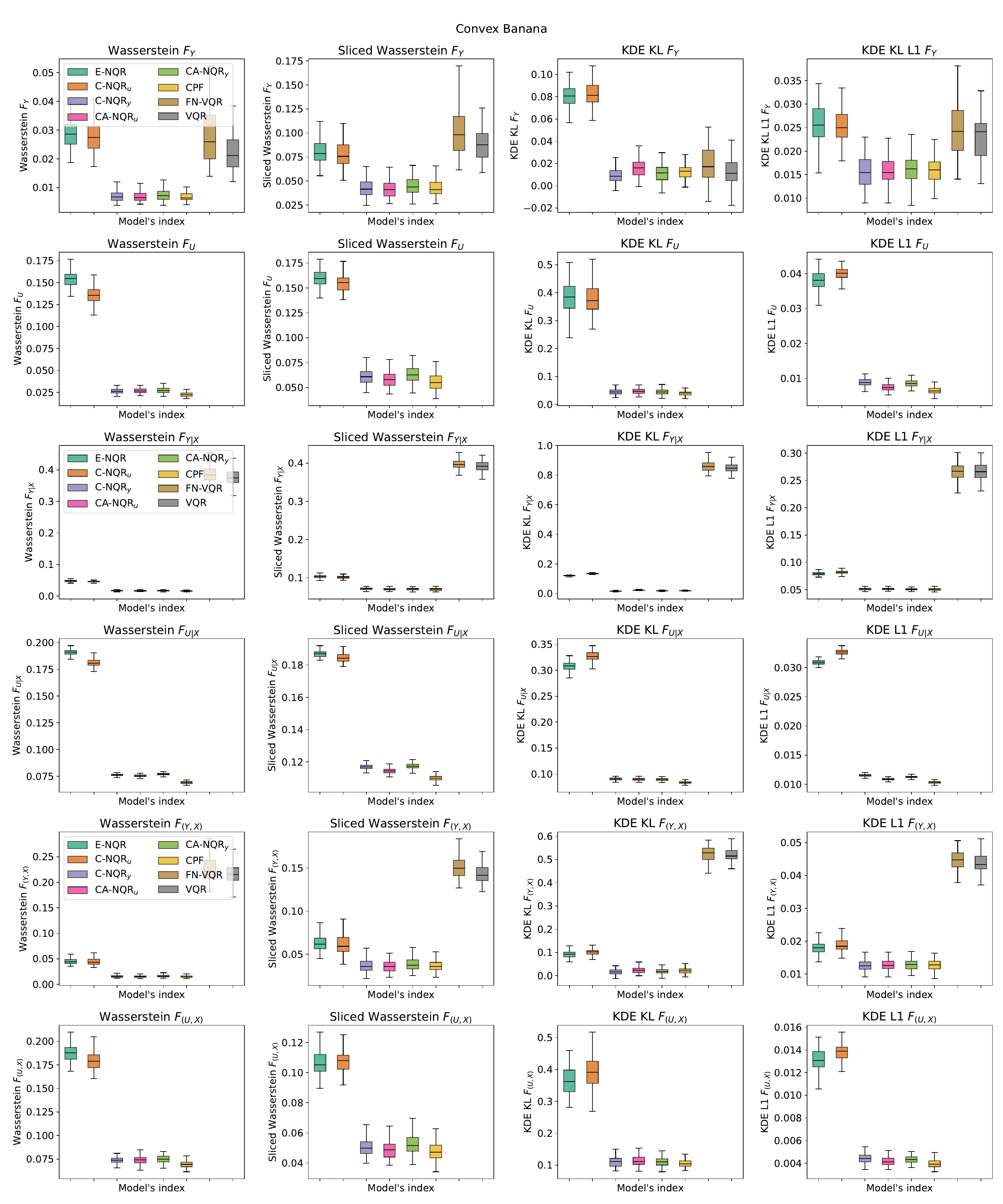}
    \vspace{-20pt}
    \caption{Full set of metrics for Banana dataset.}
    \label{fig:convex_banana_results}
  \end{figure}
\begin{figure}[t]
    \centering
    \includegraphics[width=\linewidth]{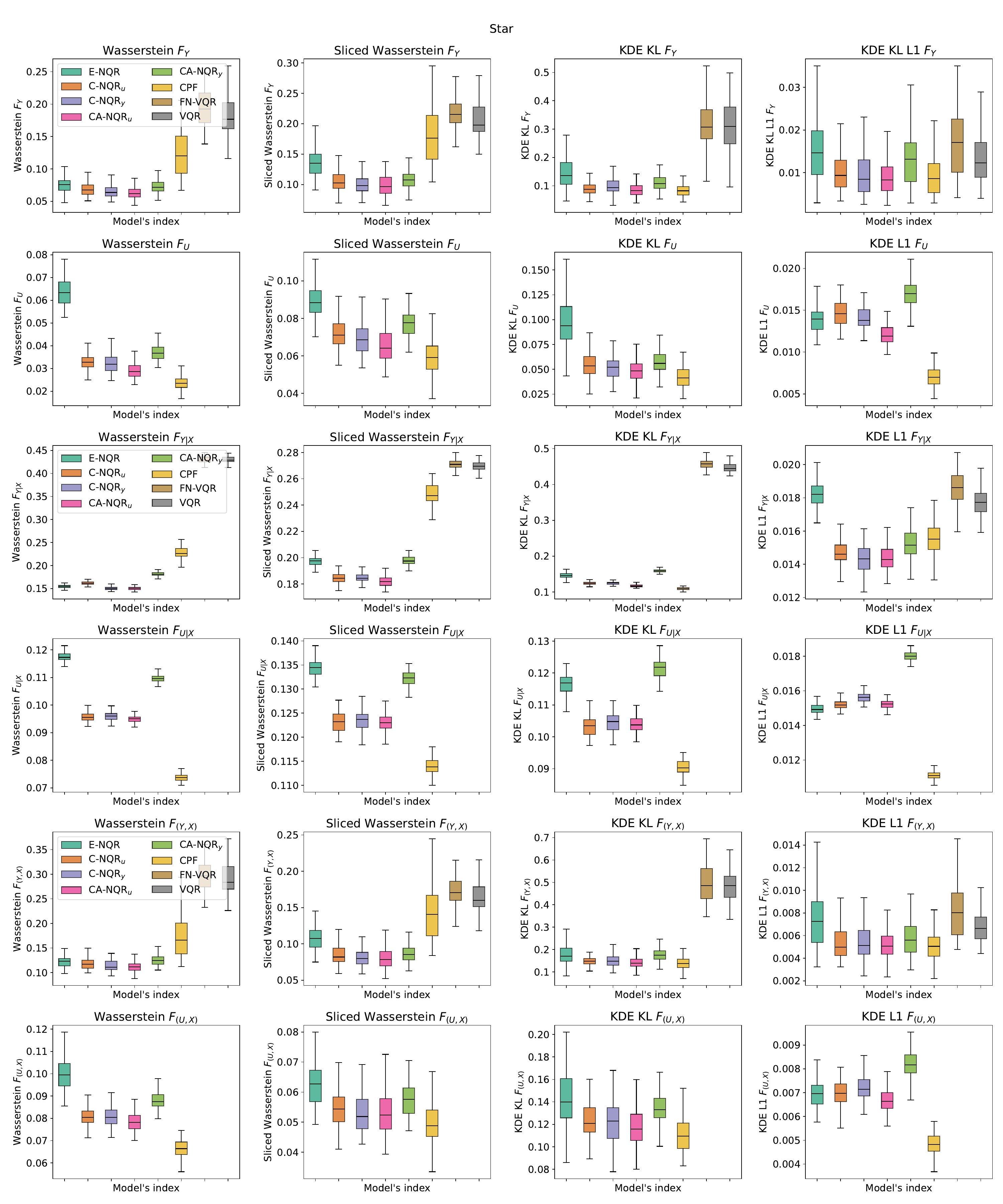}
    \vspace{-20pt}
    \caption{Full set of metrics for Star dataset.}
    \label{fig:star_results}
  \end{figure}

  \begin{figure}[t]
    \centering
    \includegraphics[width=\linewidth]{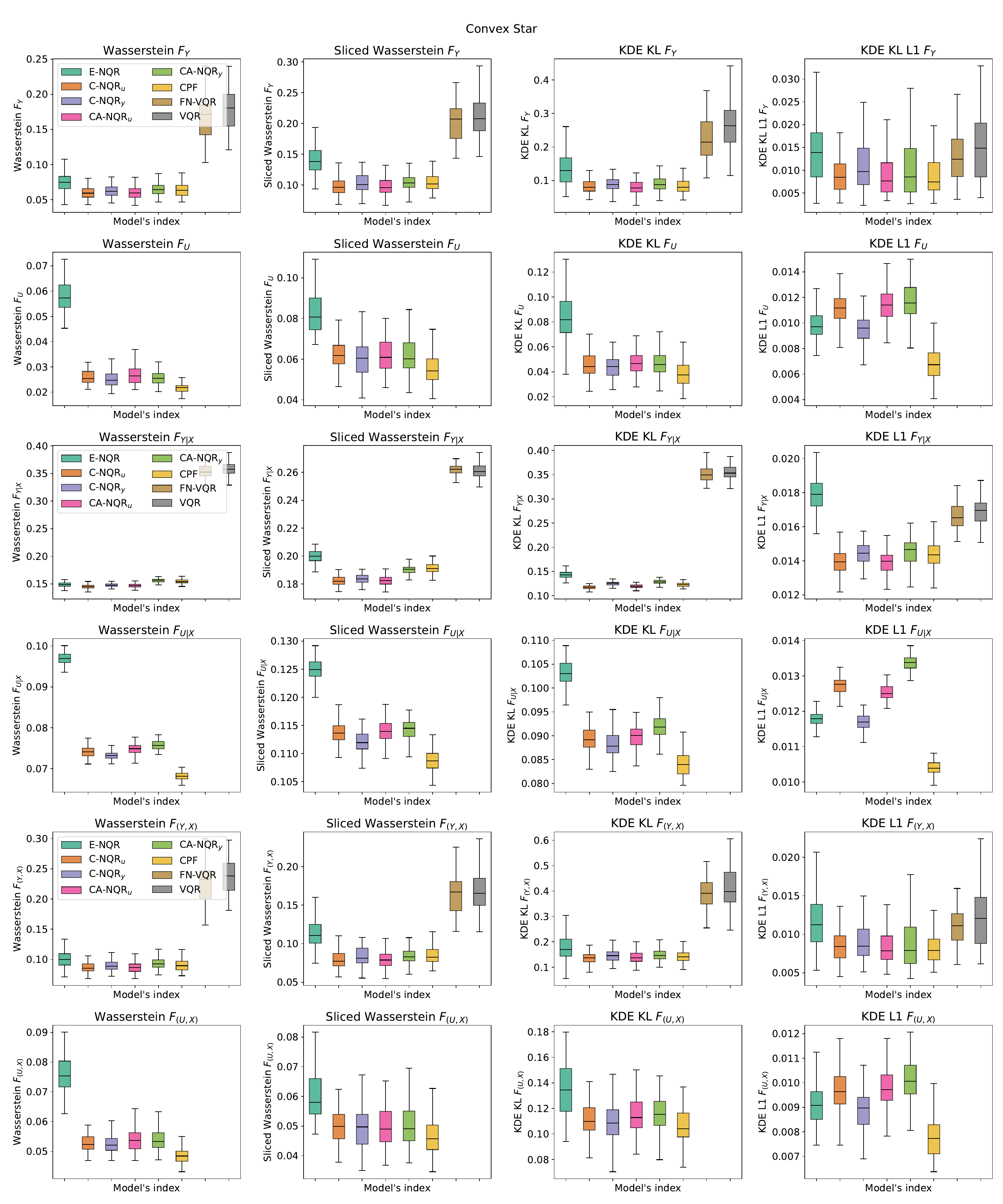}
    \vspace{-20pt}
    \caption{Full set of metrics for Convex Star dataset.}
    \label{fig:convex_star_results}
  \end{figure}
  
  \begin{figure}[t]
    \centering
    \includegraphics[width=\linewidth]{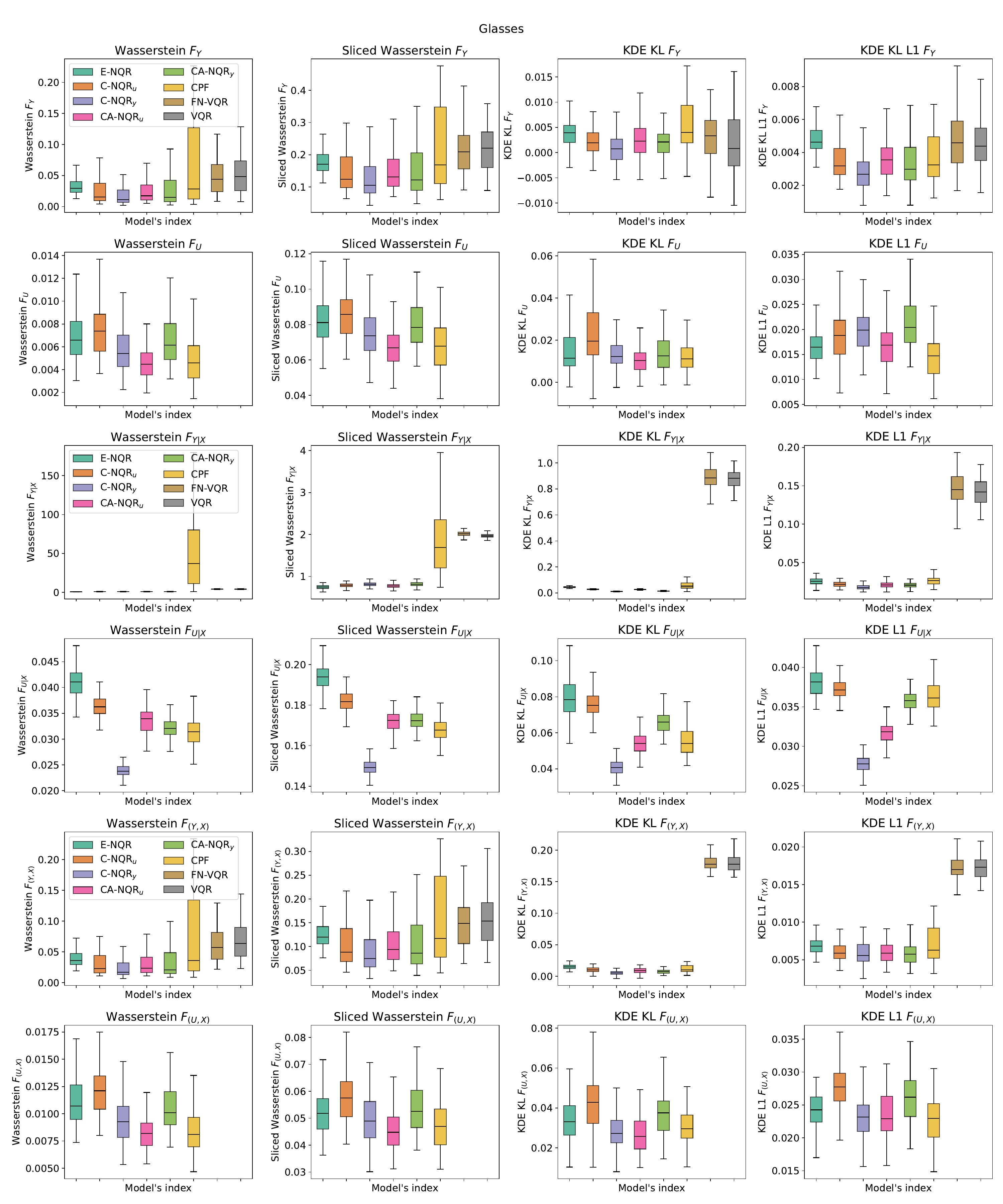}
    \vspace{-20pt}
    \caption{Full set of metrics for Glasses dataset.}
    \label{fig:glasses_results}
  \end{figure}

  \begin{figure}[t]
    \centering
    \includegraphics[width=\linewidth]{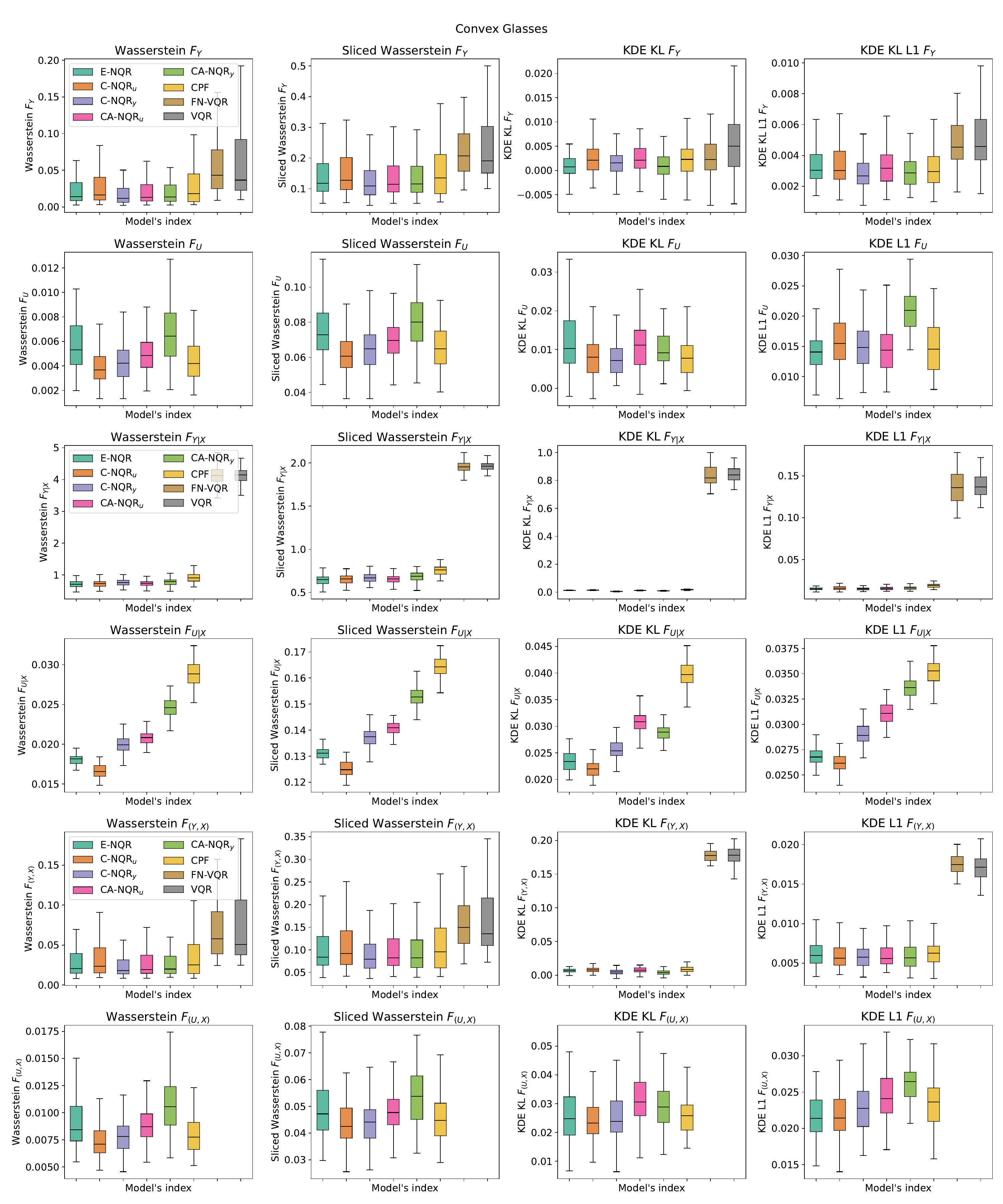}
    \vspace{-20pt}
    \caption{Full set of metrics for Convex Glasses dataset.}
    \label{fig:convex_glasses_results}
  \end{figure}

\end{document}